 \title{Neural Oscillators are Universal}
\author{%
  Samuel Lanthaler \\
  California Institute of Technology
  \And
  T. Konstantin Rusch \\
  ETH Zurich
  \And
  Siddhartha Mishra \\
  ETH Zurich
  }
\date{\today}
\newcommand{\R}{\mathbb{R}}
\newcommand{\N}{\mathbb{N}}
\newcommand{\cE}{\mathcal{E}}
\newcommand{\cL}{\mathcal{L}}
\newcommand{\cR}{\mathcal{R}}
\newcommand{\cS}{\mathcal{S}}
\newcommand{\cW}{\mathcal{W}}
\newcommand{\cX}{\mathcal{X}}
\newcommand{\cY}{\mathcal{Y}}
\newcommand{\define}{\textbf}
\newcommand{\diag}{\mathrm{diag}}
\renewcommand{\Re}{\mathrm{Re}}
\renewcommand{\tilde}{\widetilde}
\renewcommand{\hat}{\widehat}
\newcommand{\set}[2]{{\left\{ #1 \,\middle|\, #2 \right\}}}
\newcommand{\slot}{{\,\cdot\,}}
\newcommand{\supp}{\mathrm{supp}}
\newcommand{\Lip}{\mathrm{Lip}}
\declaretheoremstyle[
  headfont=\normalfont\bfseries\itshape,
  numbered=unless unique,
  bodyfont=\normalfont,
  spaceabove=1em plus 0.75em minus 0.25em,
  spacebelow=1em plus 0.75em minus 0.25em,
  qed={},
]{deflt}
\theoremstyle{deflt}
\newtheorem{theorem}{Theorem}[section]
\newtheorem{remark}[theorem]{Remark}
\newtheorem{lemma}[theorem]{Lemma}
\numberwithin{equation}{section}
\numberwithin{theorem}{section}
\begin{document}

\maketitle

\begin{abstract}
Coupled oscillators are being increasingly used as the basis of machine learning (ML) architectures, for instance in sequence modeling, graph representation learning and in physical neural networks that are used in analog ML devices. We introduce an abstract class of \emph{neural oscillators} that encompasses these architectures and prove that neural oscillators are universal, i.e, they can approximate any continuous and casual operator mapping between time-varying functions, to desired accuracy. This universality result provides theoretical justification for the use of oscillator based ML systems. The proof builds on a fundamental result of independent interest, which shows that a combination of forced harmonic oscillators with a nonlinear read-out suffices to approximate the underlying operators.
\end{abstract}

\section{Introduction}
Oscillators are ubiquitous in the sciences and engineering \cite{GH1,stro1}. Prototypical examples include pendulums in mechanics, feedback and relaxation oscillators in electronics, business cycles in economics and heart beat and circadian rhythms in biology. Particularly relevant to our context is the fact that the neurons in our brain can be thought of as oscillators on account of the periodic spiking and firing of the action potential \cite{erm1,erm2}. Consequently, functional brain circuits such as cortical columns are being increasingly analyzed in terms of networks of coupled oscillators \cite{erm1}. 

Given this wide prevalence of (networks of) oscillators in nature and man-made devices, it is not surprising that oscillators have inspired various machine learning architectures in recent years. Prominent examples include the \emph{CoRNN} \cite{rusch2021coupled} and \emph{UnICORNN} \cite{rusch21a} recurrent neural networks for sequence modeling. CoRNN is based on a network of coupled, forced and damped oscillators, whereas UnICORNN is a multi-layer sequence model that stacks networks of independent undamped oscillators as hidden layers within an RNN. Both these architectures were rigorously shown to mitigate the exploding and vanishing gradient problem \cite{evgp} that plagues RNNs. Hence, both CoRNN and UnICORNN performed very well on sequence learning tasks with \emph{long-term dependencies}. Another example of the use of oscillators in machine learning is provided by \emph{GraphCON} \cite{graphcon}, a framework for designing graph neural networks (GNNs) \cite{bronsteinbook}, that is based on coupled oscillators. GraphCON was also shown to ameliorate the \emph{oversmoothing} problem \cite{ruschos} and allow for the deployment of multi-layer deep GNNs. Other examples include Second Order Neural ODEs (SONODEs) \citep{norcliffe2020second}, which can be interpreted as oscillatory neural ODEs, locally coupled oscillatory recurrent networks (LocoRNN) \citep{locoRNN}, and Oscillatory Fourier Neural Network (O-FNN) \citep{ofnn}.

Another avenue where ML models based on oscillators arise is that of \emph{physical neural networks} (PNNs) \cite{wright2022deep} i.e., physical devices that perform machine learning on analog (beyond digital) systems. Such analog systems have been proposed as alternatives or accelerators to the current paradigm of machine learning on conventional electronics, allowing us to significantly reduce the prohibitive energy costs of training state-of-the-art ML models. In \cite{wright2022deep}, the authors propose a variety of physical neural networks which include a mechanical network of multi-mode oscillations on a plate and electronic circuits of oscillators as well as a network of nonlinear oscillators. Coupled with a novel \emph{physics aware training} (PAT) algorithm, the authors of \cite{wright2022deep} demonstrated that their nonlinear oscillatory PNN achieved very good performance on challenging benchmarks such as Fashion-MNIST \citep{xiao2017fashion}. Moreover, other oscillatory systems such as coupled lasers and spintronic nano-oscillators have also been proposed as possible PNNs, see \cite{velichko} as an example of the use of thermally coupled vanadium dioxide oscillators for image recognition and \cite{romera,torrejohn} 
for the use of spin-torque nano-oscillators for speech recognition and for neuromorphic computing, respectively. 

What is the rationale behind the successful use of (networks of) oscillators in many different contexts in machine learning? The authors of \cite{rusch2021coupled} attribute it to the inherent stability of oscillatory dynamics, as the state (and its gradients) of an oscillatory system remain within reasonable bounds throughout the time-evolution of the system. However, this is at best a partial explanation, as it does not demonstrate why oscillatory dynamics can learn (approximate) mappings between inputs and outputs rather than bias the learned states towards oscillatory functions. As an example, consider the problem of classification of MNIST \citep{mnist} (or Fashion-MNIST) images. It is completely unclear if the inputs (vectors of pixel values), outputs (class probabilities) and the underlying mapping possess any (periodic) oscillatory structure. Consequently, how can oscillatory RNNs (such an CoRNN and UnICORNN) or a network of oscillatory PNNs learn the underlying mapping? 

Our main aim in this paper is to provide an answer to this very question on the ability of neural networks, based on oscillators, to express (to approximate) arbitrary mappings. To this end, 
\begin{itemize}
    \item  We introduce an abstract framework of \emph{neural oscillators} that encompasses both sequence models such as CoRNN and UnICORNN, as well as variants of physical neural networks as the ones proposed in \cite{wright2022deep}. These neural oscillators are defined in terms of second-order versions of \emph{neural ODEs} \cite{neuralode}, and combine \emph{nonlinear dynamics} with a \emph{linear read-out}.   
\item We prove a \emph{Universality} theorem for neural oscillators by showing that they can approximate, to any given tolerance, continuous operators between appropriate function spaces. 
\item Our proof of universality is based on a novel theoretical result of independent interest, termed the \emph{fundamental Lemma}, which implies that a suitable combination of \emph{linear oscillator dynamics} with \emph{nonlinear read-out} suffices for universality.
\end{itemize}

Such universality results, \cite{BAR1,CY1,HOR1,PIN1} and references therein, have underpinned the widespread use of traditional neural networks (such as multi-layer perceptrons and convolutional neural networks). Hence, 
our universality result establishes a firm mathematical foundation for the deployment of neural networks, based on oscillators, in myriad applications. Moreover, our constructive proof provides insight into how networks of oscillators can approximate a large class of mappings.

\section{Neural Oscillators}
\label{sec:2}
\paragraph{General Form of Neural Oscillators.} Given $u: [0,T]\to \R^p$ as an input signal, for any final time $T \in \R_+$, we consider the following system of \emph{neural} ODEs for the evolution of dynamic hidden variables $y \in \R^m$, coupled to a linear read-out to yield the output $z\in \R^{q}$,
\begin{subequations}
\label{eq:osc}
\begin{align}[left = \empheqlbrace\,]
\ddot{y}(t) &= \sigma\left( W y(t) + V u(t) + b\right), \\
y(0) &= \dot{y}(0) = 0, \\
z(t) &= Ay(t) + c.
\end{align}
\end{subequations}
Equation \eqref{eq:osc} defines an input-/output-mapping $u(t) \mapsto z(t)$, with time-dependent output $z: [0,T] \to \R^{q}$. Specification of this system requires a choice of the hidden variable dimension $m$ and the activation function $\sigma$. The resulting mapping $u\mapsto z$ depends on tunable weight matrices $W \in \R^{m\times m}$, $V \in \R^{m\times p}$, $A \in \R^{q \times m}$ and bias vectors $b\in \R^m$, $c \in \R^q$. For simplicity of the exposition, we consider only \emph{activation functions $\sigma\in C^\infty(\R)$, with $\sigma(0) = 0$ and $\sigma'(0) = 1$}, such as $\tanh$ or $\sin$, although more general activation functions can be readily considered. This general second-order neural ODE system \eqref{eq:osc} will be referred to as a \define{neural oscillator}. 

\paragraph{Multi-layer neural oscillators.} As a special case of neural oscillators, we consider the following \emph{much sparser} class of second-order neural ODEs, 
\begin{subequations}
\label{eq:layered}    
\begin{align}[left = \empheqlbrace\,]
y^0(t) &:= u(t), \\
\ddot{y}^\ell(t) &= \sigma\left( w^\ell \odot y^\ell(t) + V^\ell y^{\ell-1}(t) + b^\ell\right), \quad (\ell = 1,\dots, L), \\
y^\ell(0) &= \dot{y}^\ell(0) = 0, \\
z(t) &= Ay^L(t) + c.
\end{align}
\end{subequations}
In contrast to the general neural oscillator \eqref{eq:osc}, the above multi-layer neural oscillator \eqref{eq:layered} defines a \emph{hierarchical} structure; The solution $y^\ell \in \R^{m_\ell}$ at level $\ell$ solves a second-order ODE with driving force $y^{\ell-1}$, and the lowest level, $y^0 = u$, is the input signal. Here, the layer dimensions $m_1,\dots, m_L$ can vary across layers, the weights $w^\ell \in \R^{m_\ell}$ are given by vectors, with $\odot$ componentwise multiplication, $V^\ell \in \R^{m_\ell\times m_{\ell-1}}$ is a weight matrix, and $b^\ell \in \R^{m_\ell}$ the bias. Given the result of the final layer, $y^L$, the output signal is finally obtained by an affine output layer $z(t) = A y^L(t) + c$. In the multi-layer neural oscillator, the matrices $V^\ell$, $A$ and vectors $w^\ell$, $b^\ell$ and $c$ represent the trainable hidden parameters. The system \eqref{eq:layered} is a special case of \eqref{eq:osc}, since it can be written in the form \eqref{eq:osc}, with $y := [y^L,y^{L-1},\dots, y^1]^T$, $b := [b^L,\dots,b^1]^T$, and a (upper-diagonal) block-matrix structure for $W$:
\begin{align}
\label{eq:layeredW}
W
:=
\begin{bmatrix}
w^L I & V^L & 0 &  \dots & 0  \\
0     & w^{L\!-\!1} I & V^{L\!-\!1} & \ddots & \vdots \\
\vdots &  \ddots & & \ddots  &   0  \\
0 & \dots & 0 & w^2 I & V^2  \\
0 & \dots & 0 &  0 & w^1 I  \\
\end{bmatrix}, 
\quad
V 
:=
\begin{bmatrix}
0 \\
\vdots \\
\vdots \\
0 \\
V^1
\end{bmatrix}
\end{align}
Given the block-diagonal structure of the underlying weight matrices, it is clear that the multi-layer neural oscillator \eqref{eq:layered} is a much sparser representation of the general neural operator \eqref{eq:osc}. Moreover, one can observe from the structure of the neural ODE \eqref{eq:layered} that within each layer, the individual neurons are \emph{independent} of each other. 

Assuming that $w^\ell_i \neq 0$, for all $1 \leq i \leq m_\ell$ and all $1 \leq \ell \leq L$, we further highlight that the multi-layer neural oscillator \eqref{eq:layered} is a Hamiltonian system, 
\begin{equation}
\label{eq:ham1}
 \quad \dot{y}^\ell = \frac{\partial H}{\partial \dot{y}^\ell},\quad   \ddot{y}^\ell= -\frac{\partial H}{\partial y^\ell}, 
\end{equation}
with the \emph{layer-wise time-dependent Hamiltonian},
\begin{equation}
\label{eq:hamil}
    H(y^\ell,\dot{y}^\ell,t) = \frac{1}{2}\|\dot{y}^\ell\|^2 - \sum_{i=1}^{m_\ell} \frac{1}{w^\ell_i}\hat{\sigma}(w^\ell_i y^\ell_i + (V^\ell y^{\ell-1})_i + b^\ell_i),
\end{equation}
with $\hat{\sigma}$ being the antiderivative of $\sigma$, and $\|{\bf x}\|^2 = \langle {\bf x}, {\bf x} \rangle$ denoting the Euclidean norm of the vector ${\bf x} \in \R^m$ and $\langle \cdot,\cdot \rangle$ the corresponding inner product. Hence, any symplectic discretization of the multi-layer neural oscillator \eqref{eq:layered} will result in a fully reversible model, which can first be leveraged in the context of normalizing flows \citep{rezende2015variational}, and second leads to a memory-efficient training, as the intermediate states (i.e., $y^\ell(t_0),\dot{y}^\ell(t_0), y_\ell(t_1),\dot{y}_\ell(t_1),\dots, y_\ell(t_N),\dot{y}_\ell(t_N)$, for some time discretization $t_0, t_1, \dots, t_N$ of length $N$) do not need to be stored and can be reconstructed during the backward pass. This potentially leads to a drastic memory saving of $\mathcal{O}(N)$ during training.

\subsection{Examples of Neural Oscillators}
\paragraph{(Forced) harmonic oscillator.} Let $p=m=q=1$ and we set $W = -\omega^2$, for some $\omega \in \R$, $V=1,b=0$ and the activation function to be identity $\sigma(x)=x$. In this case, the neural ODE \eqref{eq:osc} reduces to the ODE modeling the dynamics of a forced simple harmonic oscillator \cite{GH1} of the form, 
\begin{align}
  \label{eq:osc0}
  \ddot{y} = -\omega^2 y + u,
  \quad
  y(0) = \dot{y}(0) = 0.
\end{align}
Here, $y$ is the displacement of the oscillator, $\omega$ the frequency of oscillation and $u$ is a forcing term that forces the motion of the oscillator. Note that \eqref{eq:osc0} is also a particular example of the multi-layer oscillator \eqref{eq:layered} with $L=1$.

This simple example provides justification for our terminology of neural oscillators, as in general, the hidden state $y$ can be thought of as the vector of displacements of $m$-coupled oscillators, which are coupled together through the weight matrix $W$ and are forced through a forcing term $u$, whose effect is modulated via $V$ and a bias term $b$. The nonlinear activation function mediates possible nonlinear feedback to the system on account of large displacements. 
\paragraph{CoRNN.} The Coupled oscillatory RNN (CoRNN) architecture \cite{rusch2021coupled} is given by the neural ODE:
\[
\ddot{y} = \sigma\left(Wy + \cW \dot{y} + Vu + b \right) - \gamma y - \epsilon \dot{y}.
\]
We can recover the neural oscillator \eqref{eq:osc} as a special case of CoRNN by setting $\cW ={\bf 0},\gamma = \epsilon =0$; thus, a universality theorem for neural oscillators immediately implies a corresponding universality result for the CoRNN architecture. 

\paragraph{UnICORNN.} The Undamped Independent Controlled Oscillatory RNN (UnICORNN) architecture of \cite[eqn. 1]{rusch21a} recovers the multi-layer neural oscillator \eqref{eq:layered} in the case where the fundamental frequencies of UnICORNN are automatically determined inside the weight matrix $W$ in \eqref{eq:osc}.
\paragraph{Nonlinear oscillatory PNN of \cite{wright2022deep}.} In \cite[SM, Sect. 4.A]{wright2022deep}, the authors propose an analog machine learning device that simulates a network of nonlinear oscillators, for instance realized through coupled pendula. The resulting mathematical model is the so-called simplified Frenkel-Kontorova model \cite{FK} given by the ODE system, 
\[
M \ddot{\theta} = -K \sin(\theta) - C \sin(\theta) + F,
\]
where $\theta = (\theta_1,\dots, \theta_N)$ is the vector of angles across all coupled pendula, $M = \diag(\mu^1,\dots, \mu^N)$ is a diagonal mass matrix, $F$ an external forcing, $K =\diag(k^1,\dots, k^N)$ the ``spring constant''  for pendula, given by $k^i = \mu^i g/\ell$ with $\ell$ the pendulum length and $g$ the gravitational acceleration, and where $C = C^T$ is a symmetric matrix, with 
\begin{align}
\label{eq:constraint}
C_{\ell\ell} = -\sum_{\ell' \ne \ell} C_{\ell\ell'},
\quad  \text{ so that } \quad
[C\sin(\theta)]_\ell = \sum_{\ell' \ne \ell} C_{\ell\ell'} (\sin(\theta_{\ell'}) - \sin(\theta_\ell) ), 
\end{align}
which quantifies the coupling between different pendula. We note that this simplified Frenkel-Kontorova system can also model other coupled nonlinear oscillators, such as coupled lasers or spintronic oscillators \cite{wright2022deep}.

We can bring the above system into a more familiar form by introducing the variable $y$ according to the relationship $P y = \theta$ for a matrix $P$. Substitution of this ansatz then yields $MP\ddot{y} = -(K+C) \sin(Py) + F$;
choosing $P = M^{-1} (K+C)$, we find
\begin{align}
\label{eq:pendula}
\ddot{y} = -\sin(M^{-1} (K+C)y) + F,
\end{align}
which can be written in the form $\ddot{y} = \sigma(Wy) + F$ for $\sigma = -\sin(\slot)$ and $W = M^{-1}(K+C)$. If we now take $C$ in a block-matrix form 
\[
C := 
\begin{bmatrix}
\gamma^L I & C^L & 0 &  \dots & 0  \\
C^{L,T}   & \gamma^{L\!-\!1} I &  \ddots && \vdots \\
0 &  \ddots & & \ddots  &   0  \\
\vdots & \dots &  C^{3,T} & \gamma^2 I & C^2  \\
0 & \dots & 0 &  C^{2,T} & \gamma^1 I  \\
\end{bmatrix}, 
\]
and with corresponding mass matrix $M$ in block-matrix form $M = \diag(\mu^L I, \mu^{L-1} I, \dots, \mu^1 I)$, then with $\rho^\ell := \gamma^\ell / \mu^\ell$, we have 
\[
M^{-1} C := 
\begin{bmatrix}
\rho^L I & C^L/\mu^L & 0 &  \dots & 0  \\
C^{L,T}/\mu^{L-1} & \rho^{L\!-\!1} I &  \ddots && \vdots \\
0 & \ddots  &  & \ddots  &   0  \\
\vdots &  &  \ddots & \rho^2 I & C^2/\mu^2  \\
0 & \dots  & 0 &  C^{2,T}/\mu^1 & \rho^1 I  \\
\end{bmatrix}, 
\]
Introducing an ordering parameter $\epsilon>0$, and choosing $\gamma^\ell, C^\ell, \mu^\ell \sim \epsilon^\ell$, it follows that 
$
\rho^\ell, \, \frac{C^\ell}{\mu^\ell} = O(1)$, and 
$\frac{C^\ell}{\mu^{\ell-1}} = O(\epsilon)$.
Hence, with a suitable ordering of the masses across the different layers, one can introduce an effective one-way coupling, making 
\[
M^{-1}C = 
\begin{bmatrix}
\rho^L I & V^L & 0 &  \dots & 0  \\
0 & \rho^{L\!-\!1} I & V^{L\!-\! 1} & \ddots & \vdots \\
\vdots & & \ddots & & 0 \\
0 & \dots &  0 & \rho^2 I & V^2  \\
0 & \dots & 0 &  0 & \rho^1 I  \\
\end{bmatrix}
+ O(\epsilon),
\]
upper triangular, up to small terms of order $\epsilon$. We note that the diagonal entries $\rho^\ell$ in $M^{-1}C$ are determined by the off-diagonal terms through the identity \eqref{eq:constraint}. The additional degrees of freedom in the (diagonal) $K$-matrix in \eqref{eq:pendula} can be used to tune the diagonal weights of the resulting weight matrix $W = M^{-1} (K + C)$. 

Thus, physical systems such as the Frankel-Kontorova system of nonlinear oscillators can be approximated (to leading order) by multi-layer systems of the form 
\begin{align}
\label{eq:osc-physical}
\ddot{y}^\ell = \sigma\left( w^\ell \odot y^\ell + V^\ell y^{\ell-1} \right) + F^\ell,
\end{align}
with $F^\ell$ an external forcing, representing a tunable linear transformation of the external input to the system. The only formal difference between \eqref{eq:osc-physical} and \eqref{eq:layered} is (i) the absence of a bias term in \eqref{eq:osc-physical} and (ii) the fact that the external forcing appears outside of the nonlinear activation function $\sigma$ in \eqref{eq:osc-physical}. A bias term could readily be introduced by measuring the angles represented by $y^\ell$ in a suitably shifted reference frame; physically, this corresponds to tuning the initial position $y^\ell(0)$ of the pendula, with $y^\ell(0)$ also serving as the reference value. Furthermore, in our proof of universality for \eqref{eq:layered}, it makes very little difference whether the external forcing $F$ is applied inside the activation function, as in (\ref{eq:layered}b) resp. (\ref{eq:osc}a), or outside as in \eqref{eq:osc-physical}; indeed, the first layer in our proof of universality will in fact approximate the \emph{linearized dynamics} of (\ref{eq:layered}b), i.e. a forced harmonic oscillator \eqref{eq:osc0}. Consequently, a universality result for the multi-layer neural oscillator \eqref{eq:layered} also implies universality of variants of nonlinear oscillator-based physical neural networks, such as those considered in \cite{wright2022deep}.  
\section{Universality of Neural Oscillators}
\label{sec:3}
In this section, we state and sketch the proof for our main result regarding the universality of neural oscillators \eqref{eq:osc} or, more specifically, multi-layer oscillators \eqref{eq:layered}. To this end, we start with some mathematical preliminaries to set the stage for the main theorem. 
\subsection{Setting}

\paragraph{Input signal.}

We want to approximate operators $\Phi: u \mapsto \Phi(u)$, where $u=u(t)$ is a time-dependent input signal over a time-interval $t\in [0,T]$, and $\Phi(u)(t)$ is a time-dependent output signal. We will assume that the input signal $t\mapsto u(t)$ is continuous, and that $u(0) = 0$. To this end, we introduce the space
\[
C_0([0,T];\R^p) := \set{u: [0,T] \to \R^p}{t\mapsto u(t) \text{ is continuous and } u(0) = 0}.
\]
We will assume that the underlying operator defines a mapping $\Phi: C_0([0,T];\R^p) \to C_0([0,T];\R^q)$. 

The approximation we discuss in this work are based on oscillatory systems starting from rest. These oscillators are forced by the input signal $u$. For such systems the assumption that $u(0) = 0$ is necessary, because the oscillator starting from rest takes a (arbitrarily small) time-interval to synchronize with the input signal (to ``warm up''); If $u(0) \ne 0$, then the oscillator cannot accurately approximate the output during this warm-up phase. This intuitive fact is also implicit in our proofs. We will provide a further comment on this issue in Remark \ref{rem:main}, below.

\paragraph{Operators of interest.}
We consider the approximation of an operator $\Phi: C_0([0,T];\R^p) \to C_0([0,T];\R^q)$, mapping a continuous input signal $u(t)$ to a continuous output signal $\Phi(u)(t)$. We will restrict attention to the uniform approximation of $\Phi$ over a compact set of input functions $K \subset C_0([0,T];\R^p)$. We will assume that $\Phi$ satisfies the following properties:
\begin{itemize}
\item $\Phi$ is \define{causal}: For any $t\in [0,T]$, if $u, v \in C_0([0,T];\R^p)$ are two input signals, such that $u|_{[0,t]} \equiv v|_{[0,t]}$, then $\Phi(u)(t) = \Phi(v)(t)$, i.e. the value of $\Phi(u)(t)$ at time $t$ does not depend on future values $\set{u(\tau)}{\tau > t}$. 
\item $\Phi$ is \define{continuous} as an operator
  \[
  \Phi: (C_0([0,T];\R^p), \Vert \slot \Vert_{L^\infty}) \to (C_0([0,T];\R^q), \Vert \slot \Vert_{L^\infty}),
  \]
  with respect to the $L^\infty$-norm on the input-/output-signals.
\end{itemize}
Note that the class of Continuous and Causal operators are very general and natural in the contexts of mapping between sequence spaces or time-varying function spaces, see \cite{ort1,ort2} and references therein. 

\subsection{Universal approximation Theorem}
 The universality of neural oscillators is summarized in the following theorem:
\begin{theorem}
  \label{thm:main}[Universality of the multi-layer neural oscillator]
  Let $\Phi: C_0([0,T];\R^p) \to C_0([0,T];\R^q)$ be a causal and continuous operator. Let $K \subset C_0([0,T];\R^p)$ be compact. Then for any $\epsilon > 0$, there exist hyperparameters $L$, $m_1, \dots, m_L$, weights $w^\ell \in \R^{m_\ell}$, $V^\ell \in \R^{m_\ell\times m_{\ell-1}}$, $A\in \R^{q\times m_L}$ and bias vectors $b^\ell\in \R^{m_\ell}$, $c\in \R^{q}$, for $\ell=1,\dots, L$, such that the output $z: [0,T]\to \R^q$ of the multi-layer neural oscillator \eqref{eq:layered} satisfies
  \[
  \sup_{t\in [0,T]} | \Phi(u)(t) - z(t) | \le \epsilon, \quad \forall \, u\in K.
  \]
\end{theorem}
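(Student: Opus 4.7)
The plan is to reduce the problem, via a linearization argument, to the \emph{Fundamental Lemma} identified in the introduction: that any causal continuous operator $\Phi$ can be approximated uniformly on compact sets by a bank of forced harmonic oscillators \eqref{eq:osc0} combined with a nonlinear readout. Granting this, I would build the multi-layer neural oscillator in two conceptual stages. The first layer plays the role of the forced oscillator bank and produces features $y^1(t)$ that encode all information about $u|_{[0,t]}$ relevant for $\Phi(u)(t)$; layers $2,\dots,L$ together with the final affine map $z = A y^L + c$ then play the role of the nonlinear readout applied pointwise in time to these features.

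For the first stage, I would choose $m_1$ frequencies $\omega_1,\dots,\omega_{m_1}$ and exploit the assumptions $\sigma(0)=0$ and $\sigma'(0)=1$. Setting $w^1_i = -\epsilon\omega_i^2$, $V^1 = \epsilon V_0$, $b^1 = 0$ for a small parameter $\epsilon>0$ keeps the argument of $\sigma$ in (\ref{eq:layered}b) small uniformly over the compact set $K$, so a Taylor expansion gives
\[
\ddot y^1_i(t) = -\epsilon\omega_i^2 y^1_i(t) + \epsilon (V_0 u(t))_i + O(\epsilon^2).
\]
Rescaling time (or equivalently absorbing $\epsilon$ into the hidden features) shows that $y^1$ is uniformly close on $K$ to a bank of forced harmonic oscillators with the explicit convolution representation $y^1_i(t) = \omega_i^{-1}\int_0^t \sin(\omega_i(t-s))(V_0 u(s))_i\,ds$. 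By a Stone--Weierstrass-type density argument applied to the algebra generated by such kernels on the compact set $K\subset C_0([0,T];\R^p)$, these features jointly separate points of $K$ at every $t\in[0,T]$, and combined with a suitable continuous finite-dimensional function $\psi:\R^{m_1}\to\R^q$ they suffice to reproduce $\Phi(u)(t)$ to any prescribed accuracy, uniformly in $u\in K$ and $t\in[0,T]$.

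The central remaining task is to realize this pointwise-in-time function $\psi$ through the subsequent layers $2,\dots,L$ and the purely linear read-out $A y^L + c$. This is nontrivial because each $y^\ell(t)$ is a dynamical, history-dependent integral of $y^{\ell-1}$, whereas $\psi$ must be evaluated pointwise in $t$. I would again invoke the linearization window of $\sigma$, but now in the opposite direction: by choosing $|w^\ell|$ large in the later layers, the oscillations of $y^\ell$ around the instantaneous equilibrium $(w^\ell_i)^{-1}\sigma(V^\ell y^{\ell-1}(t)+b^\ell)$ become high-frequency and small-amplitude corrections to a slow tracking component. Cascading this construction $L-1$ times emulates one application of $\sigma$ per layer, i.e.\ a depth-$(L-1)$ feedforward network acting on the features $y^1(t)$, and the final linear map $A$ can be tuned so that the surviving oscillatory corrections are projected out, while $\psi$ is reproduced via the classical universal approximation theorem for deep feedforward networks with smooth activations.

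The principal obstacle, as I see it, is precisely this pointwise-in-time emulation of a nonlinear feedforward network by a cascade of second-order ODEs starting from rest: unlike damped systems, undamped neural oscillators have no steady state onto which later layers can relax, so matching $\sigma(V^\ell y^{\ell-1}(t)+b^\ell)$ uniformly in $t$ requires a careful oscillatory-averaging or cancellation argument carried through to the final affine map. A secondary technical point is handling the warm-up interval near $t=0$ where the oscillator has not yet synchronized with $u$ (the issue flagged in Remark \ref{rem:main}); this is precisely why the hypothesis $u(0)=0$ is indispensable. Once these two steps are established, their composition yields the claimed uniform approximation on $K$.
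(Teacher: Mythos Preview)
Your first stage matches the paper: linearize the first layer via a small parameter so that it approximates a bank of forced harmonic oscillators computing the time-windowed sine transforms $\cL_t u(\omega_j)$, and invoke the Fundamental Lemma to reduce $\Phi(u)(t)$ to a continuous function $\Psi$ of these features (the paper proves this via an explicit Fourier-inversion/quadrature argument rather than Stone--Weierstrass, but the outcome is the same).

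The genuine gap is in your second stage. Your proposed mechanism for emulating a pointwise feedforward layer --- take $|w^\ell|$ large so that $y^\ell$ tracks the ``instantaneous equilibrium'' $(w^\ell_i)^{-1}\sigma\bigl((V^\ell y^{\ell-1}(t))_i + b^\ell_i\bigr)$ --- does not deliver a nonlinear map. For the ODE $\ddot{y}^\ell = \sigma(w^\ell \odot y^\ell + V^\ell y^{\ell-1} + b^\ell)$, the equilibrium condition $\ddot{y}^\ell = 0$ forces (since $\sigma(0)=0$) the \emph{argument} of $\sigma$ to vanish, i.e.\ $y^\ell_\ast = -(w^\ell_i)^{-1}\bigl((V^\ell y^{\ell-1})_i + b^\ell_i\bigr)$, which is \emph{affine} in $y^{\ell-1}$. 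Your formula would be the equilibrium of $\ddot{y}^\ell = w^\ell\odot y^\ell + \sigma(V^\ell y^{\ell-1} + b^\ell)$, which is not the architecture \eqref{eq:layered}. So even granting a fast-tracking/averaging argument (already delicate without damping), cascading it yields only compositions of affine maps and never introduces the required nonlinearity $\sigma$ in the read-out.

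The paper resolves this obstacle by an entirely different device. In the ``nonlinear'' layer one sets $w=0$, so that $\ddot{y}(t) = \sigma(\Lambda u(t) + \gamma)$ encodes the desired pointwise value directly in the \emph{second derivative} of the hidden state. A further oscillator layer then recovers $\ddot{y}(t)$ by approximating the backward finite difference $\Delta t^{-2}\bigl(y(t) - 2y(t-\Delta t) + y(t-2\Delta t)\bigr)$; the required time-delays $y(t-\Delta t)$, $y(t-2\Delta t)$ are themselves produced by a single oscillator layer via the sine-transform reconstruction (Lemma~\ref{lem:time-delay}). This two-layer ``encode-in-$\ddot{y}$ then finite-difference'' trick is the missing idea in your outline, and it is what makes the whole construction go through with only three layers.
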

%
It is important to observe that the \emph{sparse, independent} multi-layer neural oscillator \eqref{eq:layered} suffices for universality in the considered class. Thus, there is no need to consider the wider class of neural oscillators \eqref{eq:osc}, at least in this respect. We remark in passing that Theorem \ref{thm:main} immediately implies another universality result for neural oscillators, showing that they can also be used to approximate arbitrary continuous functions $F: \R^p \to \R^q$. This extension is explained in detail in \textbf{SM} \ref{sec:add}.
\begin{remark}
\label{rem:main}
 We note that the theorem can be readily extended to remove the requirement on $u(0) = 0$ and $\Phi(u)(0) =0$. To this end, let $\Phi: C([0,T];\R^p) \to C([0,T];\R^q)$ be an operator between spaces of continuous functions, $u \mapsto \Phi(u)$ on $[0,T]$. Fix a $t_0>0$, and extend any input function $u: [0,T] \to \R^p$ to a function $\cE(u)\in C_0([-t_0,T];\R^p)$, by 
  \[
  \cE(u)(t) := 
  \begin{cases}
     \frac{(t_0 + t)}{t_0} u(0), & t \in [-t_0, 0),
       \\
     u(t), & t\in [0,T].
  \end{cases}
  \]
  Our proof of Theorem \ref{thm:main} can readily be used to show that the oscillator system with forcing $\cE(u)$, and initialized at time $-t_0<0$, can uniformly approximate $\Phi(u)$ over the entire time interval $[0,T]$, without requiring that $u(0) = 0$, or $\Phi(u)(0) = 0$. In this case, the initial time interval $[-t_0,0]$ provides the required ``warm-up phase'' for the neural oscillator. 
\end{remark}
\begin{remark}
In practice, neural ODEs such as \eqref{eq:layered} need to be discretized via suitable numerical schemes. As examples, CoRNN and UnICORNN were implemented in \cite{rusch2021coupled} and \cite{rusch21a}, respectively, with implicit-explicit time discretizations. Nevertheless, universality also applies for such discretizations as long as the time-step is small enough, as the underlying discretization is going to be a sufficiently accurate approximation of \eqref{eq:layered} and Theorem \ref{thm:main} can be used for showing universality of the discretized version of the multi-layer neural oscillator \eqref{eq:layered}. 
\end{remark}

\subsection{Outline of the Proof}
{ 
\begin{wrapfigure}{R}{.41\textwidth}
    \begin{flushright}
        \caption{Illustration of the universal 3-layer neural oscillator architecture constructed in the proof of Theorem \ref{thm:main}. }
    \includegraphics[width=.4\textwidth]{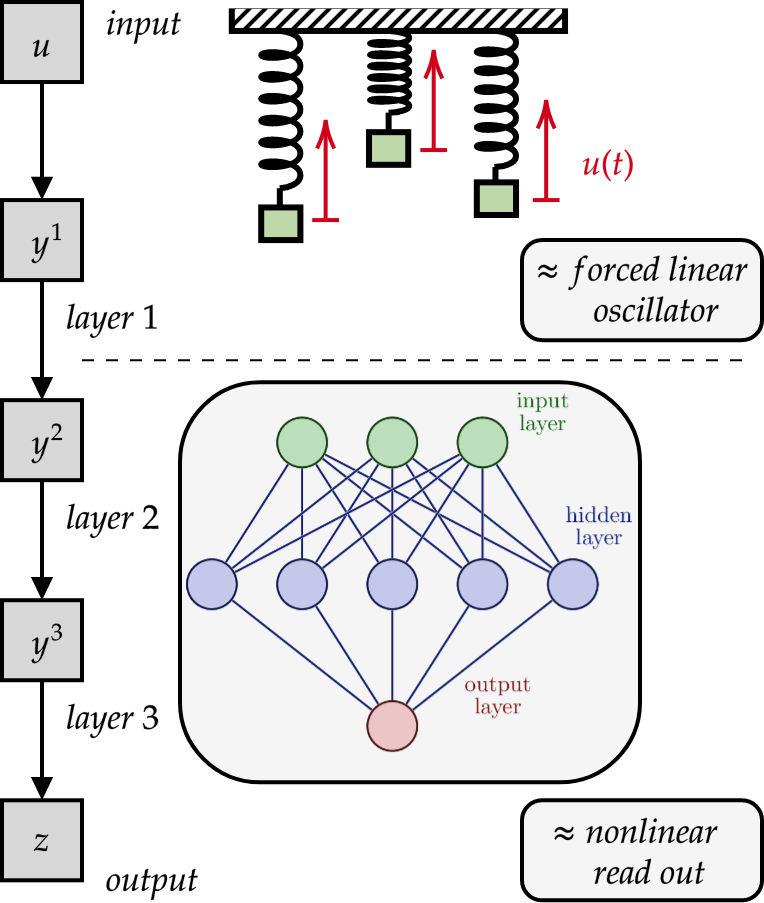}
    \label{fig:1}
    \end{flushright}
\end{wrapfigure}

In the following, we outline the proof of the universality Theorem \ref{thm:main}, while postponing the technical details to the {\bf SM}. For a given tolerance $\epsilon$, we will explicitly construct the weights and biases of the multi-layer neural oscillator \eqref{eq:layered} such that the underlying operator can be approximated within the given tolerance. This construction takes place in the following steps:
\paragraph{(Forced) Harmonic Oscillators compute a time-windowed sine transform.}
Recall that the forced harmonic oscillator \eqref{eq:osc0} is the simplest example of a neural oscillator \eqref{eq:osc}. The following lemma, proved by direct calculation in {\bf SM} \ref{app:hopf}, shows that this forced harmonic oscillator actually computes a time-windowed variant of the sine transform at the corresponding frequency:
\begin{lemma}
  \label{lem:osc}
  Assume that $\omega \ne 0$. Then the solution of \eqref{eq:osc0} is given by
  \begin{align}
    \label{eq:yt}
    y(t) = \frac{1}{\omega} \int_0^t u(t-\tau) \sin(\omega \tau) \, d\tau.
  \end{align} 
\end{lemma}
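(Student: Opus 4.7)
My plan is to verify the claimed formula directly by differentiation under the integral sign, since the right-hand side of \eqref{eq:yt} is nothing but the Duhamel / variation-of-parameters representation of the unique solution to \eqref{eq:osc0} with vanishing initial data. The homogeneous equation $\ddot v + \omega^2 v = 0$ has fundamental solutions $\cos(\omega t)$ and $\sin(\omega t)$, and the causal Green's function $G(t) = \omega^{-1}\sin(\omega t)$ (for $t\ge 0$) satisfies $G(0) = 0$, $\dot G(0) = 1$, so the convolution $y = G \ast u$ is the natural candidate.

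The concrete steps I would carry out are as follows. First, I would change variables $s = t-\tau$ to rewrite the claimed solution as $y(t) = \omega^{-1}\int_0^t \sin(\omega(t-s))\, u(s)\, ds$, which exposes the $t$-dependence inside the sine kernel. Second, I would apply Leibniz's rule: the boundary contribution at $s=t$ vanishes because $\sin(0) = 0$, leaving $\dot y(t) = \int_0^t \cos(\omega(t-s))\, u(s)\, ds$. Third, I would differentiate once more; this time the boundary term $\cos(0)\, u(t) = u(t)$ survives, while the interior contribution is $-\omega \int_0^t \sin(\omega(t-s))\, u(s)\, ds = -\omega^2 y(t)$, giving $\ddot y + \omega^2 y = u$. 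Finally, $y(0) = \dot y(0) = 0$ are immediate, since both integrals are taken over an empty domain; uniqueness for linear initial value problems then identifies this candidate with the solution of \eqref{eq:osc0}.

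I do not expect any genuine obstacle: the argument is textbook Duhamel for a second-order constant-coefficient linear ODE. Continuity of $u$, which is automatic in the ambient space $C_0([0,T];\R^p)$, is all that is needed to justify the two applications of Leibniz's rule. The only role of the hypothesis $\omega \ne 0$ is to make the scalar factor $\omega^{-1}$ well-defined; heuristically, in the limit $\omega \to 0$ one has $\omega^{-1}\sin(\omega\tau) \to \tau$, consistent with the corresponding formula $y(t) = \int_0^t \tau\, u(t-\tau)\, d\tau$ for the degenerate case $\ddot y = u$.
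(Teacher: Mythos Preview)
Your proposal is correct and matches the paper's own proof essentially line for line: the paper also rewrites the integral as $\omega^{-1}\int_0^t u(\tau)\sin(\omega(t-\tau))\,d\tau$, applies Leibniz's rule twice (noting the vanishing boundary term from $\sin(0)=0$ at the first step and the surviving $u(t)$ from $\cos(0)=1$ at the second), and reads off the initial conditions from the empty domain of integration.
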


Given the last result, for a function $u$, we define its \define{time-windowed sine transform} as follows,
\begin{equation}
    \label{eq:st}
\cL_t u (\omega) := \int_0^t u(t-\tau) \sin(\omega \tau) \, d\tau.
\end{equation}
Lemma \ref{lem:osc} shows that a forced harmonic oscillator computes \eqref{eq:st} up to a constant.

} 
\vspace{1em}
\paragraph{Approximation of causal operators from finite realizations of time-windowed sine transforms.}  
The following novel result, termed the \emph{fundamental Lemma}, shows that the time-windowed sine transform \eqref{eq:st} composed with a suitable nonlinear function can approximate causal operators $\Phi$ to desired accuracy; as a consequence, one can conclude that \emph{forced harmonic oscillators} combined with a \emph{nonlinear read-out} defines a universal architecture in the sense of Theorem \ref{thm:main}.
\begin{lemma}[Fundamental Lemma]
  \label{lem:fund}
Let $\Phi: K \subset C_0([0,T];\R^p) \to C_0([0,T];\R^q)$ be a causal and continuous operator, with $K\subset C_0([0,T];\R^p)$ compact. Then for any $\epsilon > 0$, there exists $N\in \N$, frequencies $\omega_1,\dots, \omega_N$ and a continuous mapping $\Psi: \R^{p\times N} \times [0,T^2/4] \to \R^q$, such that
\[
|\Phi(u)(t) - \Psi(\cL_t{u}(\omega_1),\dots, \cL_t{u}(\omega_N);t^2/4)| \le \epsilon,
\]
for all $u\in K$.
\end{lemma}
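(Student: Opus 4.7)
The plan is to apply the Stone--Weierstrass theorem on the compact Hausdorff space $X := K \times [0, T]$. Define $\phi: X \to \R^q$ by $\phi(u, t) := \Phi(u)(t)$; this is jointly continuous, since $\Phi: K \to (C_0([0,T]; \R^q), L^\infty)$ is continuous and evaluation $(f, t) \mapsto f(t)$ is continuous. For each $\omega > 0$ and each $i \in \{1, \ldots, p\}$, the coordinate function $(u, t) \mapsto (\cL_t u(\omega))_i$ is also jointly continuous on $X$ (the bound $|\cL_t u(\omega) - \cL_t v(\omega)| \le T\|u - v\|_\infty$ gives uniform continuity in $u$, and equicontinuity of $K$ gives continuity in $t$). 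Let $\cA \subset C(X; \R)$ be the real subalgebra generated by the constants, by all the coordinate functions $(\cL_t u(\omega))_i$, and by $(u, t) \mapsto t^2/4$.

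The heart of the proof is to show that every scalar component $\phi^{(k)}$ of $\phi$ lies in the uniform closure $\overline{\cA}$. I will invoke the standard corollary of Stone--Weierstrass: on a compact Hausdorff space, the closure of a subalgebra containing the constants equals the set of continuous functions that are invariant under the equivalence relation $x \sim_\cA y \iff g(x) = g(y)$ for every $g \in \cA$. To verify that each $\phi^{(k)}$ is $\sim_\cA$-invariant, suppose $(u_1, t_1) \sim_\cA (u_2, t_2)$. Comparing values on the generator $t^2/4$ and using $t_1, t_2 \ge 0$ forces $t_1 = t_2 =: t$, and then, component by component, $\int_0^t ((u_1)_i(t-\tau) - (u_2)_i(t-\tau)) \sin(\omega \tau)\, d\tau = 0$ for all $\omega > 0$ and all $i$. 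Extending each continuous, compactly supported integrand by zero on $[0, \infty)$ and then oddly to $\R$, the resulting function lies in $L^1(\R)$ and its Fourier transform vanishes identically; Fourier inversion together with continuity of the integrand then forces it to be zero on $[0, t]$. Hence $u_1|_{[0, t]} = u_2|_{[0, t]}$, and causality of $\Phi$ gives $\Phi(u_1)(t) = \Phi(u_2)(t)$.

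Given $\epsilon > 0$, each $\phi^{(k)}$ can now be approximated uniformly on $X$: I extract a polynomial $P_k$ in finitely many of the generators with $\|\phi^{(k)} - P_k\|_{L^\infty(X)} < \epsilon/\sqrt{q}$. Letting $\omega_1, \ldots, \omega_N$ be the union of frequencies appearing across $P_1, \ldots, P_q$, the map $\Psi := (P_1, \ldots, P_q)$ is a polynomial in the coordinates $\cL_t u(\omega_1), \ldots, \cL_t u(\omega_N)$ and $t^2/4$, hence defines a continuous map $\R^{p \times N} \times \R \to \R^q$. Restricting its domain to $\R^{p \times N} \times [0, T^2/4]$ and combining the componentwise estimates via the Euclidean norm on $\R^q$ yields the stated uniform bound $|\Phi(u)(t) - \Psi(\cL_t u(\omega_1), \ldots, \cL_t u(\omega_N); t^2/4)| \le \epsilon$ for all $u \in K$ and $t \in [0, T]$.

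The main obstacle is the point-separation argument above: it is the only place where causality enters essentially, and it requires identifying the $\cA$-equivalence classes with the causality classes (equal $t$ and agreement of $u$'s on $[0, t]$). The substantive content inside this identification is the injectivity of the half-line sine transform on continuous integrands of compact support, which I handle by Fourier inversion on the odd extension. The joint-continuity checks on $\phi$ and on the sine-transform coordinates, though necessary for Stone--Weierstrass to even apply, are routine given compactness (hence equicontinuity and uniform boundedness) of $K$.
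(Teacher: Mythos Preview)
Your argument is correct and follows a genuinely different route from the paper's. The paper proceeds constructively: it first proves an explicit reconstruction result (Lemma~\ref{lem:st}) showing that for any $u\in K$ and any $t$, the history $\tau\mapsto u(t-\tau)$ can be uniformly approximated by a finite sum $\sum_j \alpha_j\, \cL_t u(\omega_j)\sin(\omega_j\tau-\vartheta_j)$; it then defines $\Psi$ by composing $\Phi$ with this reconstruction and uses continuity of $\Phi$ (via Lemma~\ref{lem:cpct}) to control the error. You replace all of this by a single Stone--Weierstrass application on $K\times[0,T]$, with the key point-separation step handled by Fourier uniqueness on the odd $L^1$ extension, so that the $\sim_\cA$-classes are exactly the causality classes $\{(u,t):u|_{[0,t]}\text{ fixed}\}$. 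Both arguments ultimately rest on the same harmonic-analytic fact (injectivity of the half-line sine transform on continuous compactly supported data), but you invoke it once abstractly whereas the paper builds a quantitative inversion formula around it. Your version is shorter and more conceptual, while the paper's construction yields Lemma~\ref{lem:st} as a reusable byproduct---and indeed that lemma is invoked again verbatim in the proof of Lemma~\ref{lem:time-delay}, so within the overall universality argument the longer route is not wasted.
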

The proof of this fundamental Lemma, detailed in {\bf SM} \ref{app:flpf}, is based on first showing that any continuous function can be reconstructed to desired accuracy, in terms of realizations of its time-windowed sine transform \eqref{eq:st} at finitely many frequencies $\omega_1,\ldots,\omega_N$ (see {\bf SM} Lemma \ref{lem:st}). Then, we leverage the continuity of the underlying operator $\Phi$ to approximate it with a finite-dimensional function $\Psi$, which takes the time-windowed sine transforms as its arguments. 

Given these two results, we can discern a clear strategy to prove the universality Theorem \ref{thm:main}. First, we will show that a general \emph{nonlinear} form of the neural oscillator \eqref{eq:layered} can also compute the time-windowed sine transform at arbitrary frequencies. Then, these outputs need to be processed in order to apply the fundamental Lemma \ref{lem:fund} and approximate the underlying operator $\Phi$. To this end, we will also approximate the function $\Psi$ (mapping finite-dimensional inputs to finite-dimensional outputs) by oscillatory layers. The concrete steps in this strategy are outlined below.
\paragraph{Nonlinear Oscillators approximate the time-windowed sine transform.} The building block of multi-layer neural oscillators \eqref{eq:layered} is the nonlinear oscillator of the form,
\begin{equation}
    \label{eq:nosc}
    \ddot{y} = \sigma(w\odot y + Vu + b).
\end{equation}
In the following Lemma (proved in {\bf SM} \ref{app:nopf}), we show that even for a nonlinear activation function $\sigma$ such as $\tanh$ or $\sin$, the nonlinear oscillator \eqref{eq:nosc} can approximate the time-windowed sine transform.
\begin{lemma}
  \label{lem:stosc}
  Fix $\omega \ne 0$. Assume that $\sigma(0) = 0$, $\sigma'(0) = 1$. For any $\epsilon > 0$, there exist $w,V,b,A\in \R$, such that the nonlinear oscillator \eqref{eq:nosc}, 
  initialized at $y(0) = \dot{y}(0) = 0$, has output
  \[
  |A y(t) - \cL_t u(\omega) |
  \le
  \epsilon, \qquad \forall u \in K, \; t\in [0,T],
  \]
 with $\cL_t u(\omega)$ being the time-windowed sine transform \eqref{eq:st}.
\end{lemma}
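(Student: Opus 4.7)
The hypothesis $\sigma(0)=0$, $\sigma'(0)=1$ suggests that on an arbitrarily small neighborhood of the origin, $\sigma$ behaves like the identity, so \eqref{eq:nosc} is a small perturbation of the linear oscillator \eqref{eq:osc0} to which Lemma \ref{lem:osc} applies exactly. The plan is therefore to pick the weights so that all relevant quantities inside $\sigma$ are small, and then quantify the resulting linearization error. A convenient ansatz is to take the scalar parameters $b=0$, $w=-\omega^2$, $V=\delta$ and readout $A=\omega/\delta$, where $\delta>0$ is a small parameter to be chosen at the end.

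\textbf{Reduction to a linear oscillator.} With these choices, \eqref{eq:nosc} becomes $\ddot{y}=\sigma(-\omega^2 y+\delta u)$. Let $y_{\mathrm{lin}}$ solve the linear counterpart $\ddot{y}_{\mathrm{lin}}=-\omega^2 y_{\mathrm{lin}}+\delta u$ with zero initial conditions. By Lemma \ref{lem:osc} (applied with forcing $\delta u$),
\[
y_{\mathrm{lin}}(t)=\frac{\delta}{\omega}\cL_t u(\omega),
\qquad\text{so that}\qquad A\,y_{\mathrm{lin}}(t)=\cL_t u(\omega)\text{ exactly}.
\]
It remains to show that $|A(y(t)-y_{\mathrm{lin}}(t))|$ can be made uniformly smaller than $\epsilon$ on $K\times[0,T]$ by shrinking $\delta$.

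\textbf{Error analysis.} Write $\sigma(x)=x+r(x)$ with $|r(x)|\le C_\sigma x^2$ on any fixed bounded neighborhood of $0$, which is available since $\sigma\in C^\infty$ and $\sigma'(0)=1$. Setting $e:=y-y_{\mathrm{lin}}$, a direct subtraction gives
\[
\ddot{e}+\omega^2 e=r\bigl(-\omega^2 y(t)+\delta u(t)\bigr),\qquad e(0)=\dot e(0)=0.
\]
Treating the right-hand side as an external forcing and applying Lemma \ref{lem:osc} once more,
\[
e(t)=\frac{1}{\omega}\int_0^t r\bigl(-\omega^2 y(t-\tau)+\delta u(t-\tau)\bigr)\sin(\omega\tau)\,d\tau.
\]
If we can show that $\sup_{t\in[0,T]}|y(t)|\le C_1\delta$ uniformly over $u\in K$, then the argument of $r$ stays bounded by $(\omega^2 C_1+M)\delta$, where $M:=\sup_{u\in K}\Vert u\Vert_{L^\infty}$ is finite by compactness of $K$. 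Consequently $|r(\cdot)|\le C_2\delta^2$, so $|e(t)|\le \frac{T C_2}{\omega}\delta^2$ and thus
\[
\bigl|A\,y(t)-\cL_t u(\omega)\bigr|=|A\,e(t)|\le \frac{TC_2}{\delta}\cdot\delta^2\cdot\frac{1}{1}= O(\delta),
\]
which is made $\le\epsilon$ by taking $\delta$ small.

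\textbf{Main obstacle: the a priori bound on $y$.} The only nontrivial step is the uniform estimate $|y(t)|\le C_1\delta$, which feeds into the nonlinear remainder and thus cannot be obtained by just comparing with $y_{\mathrm{lin}}$ unless we already control $e$. I would close this by a bootstrap/continuity argument: choose $\delta_0>0$ small enough that, as long as $|{-}\omega^2 y+\delta u|\le 1$, one has $|\sigma(x)|\le 2|x|$; define $t^\ast:=\sup\{t\le T:|y(s)|\le 2C_1\delta\text{ for all }s\le t\}$; use $|\ddot y|\le 2(\omega^2|y|+\delta M)$ together with zero initial data and double integration (or a Gronwall inequality applied to the energy $E(t)=\tfrac12\dot y^2+\tfrac{\omega^2}{2}y^2$) to show that on $[0,t^\ast]$, $|y(t)|\le C_1\delta$ strictly, where $C_1$ depends only on $\omega$, $T$ and $M$ but not on $u$ or $\delta\le\delta_0$. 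This contradicts maximality unless $t^\ast=T$, yielding the claimed uniform bound. The uniformity in $u$ is automatic because the only input-dependent quantity entering the constants is $\Vert u\Vert_{L^\infty}$, which is controlled by compactness of $K$. Choosing $\delta$ small enough so that $O(\delta)\le\epsilon$ completes the proof.
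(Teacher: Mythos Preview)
Your proposal is correct and follows essentially the same linearization-by-scaling idea as the paper: choose $w=-\omega^2$, $b=0$, and a small amplitude parameter (your $\delta$, the paper's $s$) multiplying the input so that the argument of $\sigma$ stays near zero, where $\sigma$ is indistinguishable from the identity; then compare with the exact linear oscillator via a Gronwall-type estimate. The paper organizes the comparison slightly differently---it shows that the \emph{linear} solution $Y$ is an approximate solution of the \emph{nonlinear} equation (with a small forcing defect), and then runs Gronwall between $y_s$ and $Y$---whereas you write the error ODE for $e=y-y_{\mathrm{lin}}$ directly and bound the quadratic remainder $r$. These are two equivalent phrasings of the same ``consistency plus stability'' argument. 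Your bootstrap for the a priori bound $|y|\le C_1\delta$ is in fact a bit more careful than the paper's treatment, which invokes a Lipschitz bound on $g(y,u)=s^{-1}\sigma(-s\omega^2 y+su)$ that is only stated on the range of $Y$; your continuity argument (or the energy version you sketch) closes this cleanly without appealing to global boundedness of $\sigma'$. Your readout $A=\omega/\delta$ is the correct scaling to land on $\cL_t u(\omega)$.
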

\paragraph{Coupled Nonlinear Oscillators approximate time-delays.}
The next step in the proof is to show that coupled oscillators can approximate time-delays in the continuous input signal. This fact will be of crucial importance in subsequent arguments. We have the following Lemma (proved in {\bf SM} \ref{app:tdpf}),
\begin{lemma}
  \label{lem:time-delay}
  Let $K\subset C_0([0,T];\R^p)$ be a compact subset. For every $\epsilon > 0$, and $\Delta t \ge 0$, there exist $m \in \N$, $w\in\R^{m}$, $V\in \R^{m\times p}$, $b\in \R^m$ and $A\in \R^{p\times m}$, such that the  oscillator \eqref{eq:nosc}, 
  initialized at $y(0) = \dot{y}(0) = 0$, has output
  \[
  \sup_{t\in [0,T]} |u(t-\Delta t) - A y(t)| \le \epsilon, \quad \forall \, u \in K,
  \]
  where $u(t)$ is extended to negative values $t<0$ by zero.
\end{lemma}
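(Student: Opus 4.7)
The plan is to realize the time-delay operator $u \mapsto u(\cdot - \Delta t)$ as a finite \emph{linear} combination of time-windowed sine transforms, and then invoke Lemma \ref{lem:stosc} to realize each transform independently via a scalar nonlinear oscillator. The key simplification compared to the fundamental Lemma is that time-delay is a linear causal operator, so no nonlinear read-out is needed; we can reduce everything to finding coefficients $a_1,\dots,a_N$ and frequencies $\omega_1,\dots,\omega_N$ so that
\[
\sum_{k=1}^N a_k \cL_t u(\omega_k) \;=\; \int_0^T u(t-\tau)\,K_N(\tau)\,d\tau
\quad\text{with } K_N(\tau):=\sum_{k=1}^N a_k \sin(\omega_k\tau)
\]
approximates $u(t-\Delta t)$ uniformly over $u\in K$ and $t\in[0,T]$, where we have used the zero-extension of $u$ to push the integration to $[0,T]$.

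To do this, first use the equicontinuity of $K$ (which persists for the zero-extension to $t<0$, since $u(0)=0$) together with Arzel\`a--Ascoli to choose $\eta>0$ small enough that $|u(s)-u(s')|<\epsilon/3$ whenever $|s-s'|<\eta$ for every $u\in K$. Fix a smooth mollifier $\phi_\eta\ge 0$, compactly supported in $[\Delta t - \eta/2,\Delta t+\eta/2]\cap[0,T]$, with $\int\phi_\eta = 1$; then
\[
\left|\int_0^T u(t-\tau)\,\phi_\eta(\tau)\,d\tau \;-\; u(t-\Delta t)\right| \;\le\; \epsilon/3, \qquad \forall\,u\in K,\; t\in[0,T].
\]
Next, expand $\phi_\eta$ in the sine basis $\{\sin(k\pi\tau/T)\}_{k\ge1}$, which is complete in $L^2([0,T])$. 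Choose $N$ and coefficients $a_k$ so that $\Vert \phi_\eta - K_N\Vert_{L^2([0,T])} \le \delta$ with $\delta>0$ chosen small relative to the $L^2$-diameter of $K$. Cauchy--Schwarz then gives
\[
\left|\int_0^T u(t-\tau)\bigl[\phi_\eta(\tau)-K_N(\tau)\bigr]\,d\tau\right| \;\le\; \sqrt{T}\,\sup_{u\in K}\Vert u\Vert_{L^\infty}\cdot\delta \;\le\; \epsilon/3,
\]
uniformly in $u\in K$ and $t\in[0,T]$. Combining, $\bigl|\sum_k a_k\cL_t u(\omega_k) - u(t-\Delta t)\bigr|\le 2\epsilon/3$.

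Finally, assemble the oscillator system. Because the weight in \eqref{eq:nosc} enters as a componentwise product $w\odot y$, the system decouples into $m$ independent scalar oscillators; we therefore stack $m=pN$ such oscillators, one for each pair (component index $j\in\{1,\dots,p\}$, frequency $\omega_k$). The forcing matrix $V\in\R^{m\times p}$ is chosen so that each scalar oscillator sees exactly one component $u_j$, and its parameters $w_i,b_i$ together with the corresponding read-out weight are provided by Lemma \ref{lem:stosc}, tuned so that the $i$-th oscillator approximates $\cL_t u_j(\omega_k)$ within error $\epsilon/(3 N p\max_k|a_k|)$. The read-out matrix $A\in\R^{p\times m}$ then forms the linear combination $\sum_k a_k\,\cL_t u_j(\omega_k)$ for each output coordinate $j$, so that the total error is at most $\epsilon$. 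The main technical point is aligning the three approximations (mollifier, $L^2$ sine-truncation, per-oscillator error from Lemma \ref{lem:stosc}) so that they all hold uniformly in $u\in K$ and $t\in[0,T]$; the uniformity of the first two steps is handled by compactness and Cauchy--Schwarz, and that of the third is built into the statement of Lemma \ref{lem:stosc}.
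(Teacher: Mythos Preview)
Your argument is correct, but it takes a somewhat different route from the paper's proof. The paper invokes the sine-transform reconstruction Lemma~\ref{lem:st} (already established for the fundamental lemma), which provides a representation $u(t-\tau)\approx\sum_j\alpha_j\,\cL_t u(\omega_j)\sin(\omega_j\tau-\vartheta_j)$ uniformly over a whole range of $\tau$, and then simply specializes to $\tau=\Delta t$; the coefficients of the linear read-out are then $\alpha_j\sin(\omega_j\Delta t-\vartheta_j)$. You instead build the time-delay kernel directly: approximate a mollifier $\phi_\eta$ concentrated at $\Delta t$ by a finite sine series $K_N$ in $L^2([0,T])$, and use Cauchy--Schwarz together with the compactness of $K$ to control the resulting error in $\int_0^T u(t-\tau)K_N(\tau)\,d\tau$. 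Both constructions land on the same endgame---a finite linear combination of windowed sine transforms followed by Lemma~\ref{lem:stosc}---and both rely on the zero-extension of $u$ to identify $\int_0^t$ with $\int_0^T$. The paper's approach is shorter in context because Lemma~\ref{lem:st} is already available; your approach is more self-contained and arguably more elementary for this specific statement, trading the Fourier-inversion-plus-quadrature machinery of Lemma~\ref{lem:st} for a direct $L^2$ sine expansion, at the cost of not yielding the uniform-in-$\tau$ reconstruction used elsewhere in the paper.
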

\paragraph{Two-layer neural oscillators approximate neural networks pointwise.} As in the strategy outlined above, the final ingredient in our proof of the universality theorem \ref{thm:main} is to show that neural oscillators can approximate continuous functions, such as the $\Psi$ in the fundamental lemma \ref{lem:fund}, to desired accuracy. To this end, we will first show that neural oscillators can approximate general neural networks (perceptrons) and then use the universality of neural networks in the class of continuous functions to prove the desired result. We have the following lemma, 
\begin{lemma}
  \label{lem:NN}
  Let $K\subset C_0([0,T];\R^p)$ be compact. For matrices $\Sigma,\Lambda$ and bias $\gamma$, and any $\epsilon > 0$, there exists a two-layer ($L=2$) oscillator \eqref{eq:layered}, initialized at $y^\ell(0) = \dot{y}^\ell(0) =0$, $\ell = 1,2$, such that
  \[
  \sup_{t\in [0,T]}
  \left|
  \left[A y^2(t) + c\right] - \Sigma \sigma(\Lambda u(t) + \gamma)
  \right|
  \le \epsilon, \quad \forall u \in K.
  \]  
\end{lemma}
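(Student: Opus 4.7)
The plan is to use the first layer to apply the pointwise nonlinearity $\sigma$ inside its dynamics, and the second layer to recover the resulting signal at the output via a finite-difference combination of time-shifted copies, realised through Lemma \ref{lem:time-delay}.

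\paragraph{First layer.} I will take $m_1$ equal to the number of rows of $\Lambda$ and set $w^1 = 0$, $V^1 = \Lambda$, $b^1 = \gamma$. The first-layer equation then collapses to $\ddot y^1(t) = \sigma(\Lambda u(t) + \gamma) =: g(t)$, and with $y^1(0) = \dot y^1(0) = 0$ this yields $y^1(t) = \int_0^t (t-s)\, g(s)\, ds$. Since $u\mapsto g(\slot;u)$ is continuous and $K$ is compact, the image $\{y^1(\slot;u) : u\in K\}$ is a compact subset of $C_0([0,T];\R^{m_1})$, so Lemma \ref{lem:time-delay} can be invoked with $y^1$ playing the role of the input signal.

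\paragraph{Second layer and read-out.} For a small $h>0$ to be chosen, I will apply Lemma \ref{lem:time-delay} three times to produce three blocks of scalar oscillators, grouped into $y^2$, together with read-out vectors whose linear combinations $\tilde y^1_j(t)$ approximate $y^1(t-jh)$ (extended by zero for negative arguments) to accuracy $\delta$, uniformly for $j\in\{0,1,2\}$, $t\in[0,T]$ and $u\in K$. I will then choose the affine read-out $A y^2 + c$ to equal $(\Sigma/h^2)\bigl[\tilde y^1_0(t) - 2\tilde y^1_1(t) + \tilde y^1_2(t)\bigr]$. A short calculation using $\ddot y^1 = g$ (splitting the integral over $[t-2h,t-h]$ and $[t-h,t]$ and changing variables) shows that the exact stencil satisfies $\bigl[y^1(t) - 2y^1(t-h) + y^1(t-2h)\bigr]/h^2 = g(t-h) + O(\omega_g(h))$, where $\omega_g$ is the uniform modulus of continuity of $g$ over $\{g(\slot;u) : u\in K\}$; combining with $|g(t) - g(t-h)|\le \omega_g(h)$ closes the truncation estimate.

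\paragraph{Error balancing and main obstacle.} The truncation term is $O(\|\Sigma\|\,\omega_g(h))$, while the errors from Lemma \ref{lem:time-delay} are amplified to $O(\|\Sigma\|\,\delta/h^2)$ through the weighted stencil. Given $\epsilon>0$, I will first pick $h$ so small that the truncation is below $\epsilon/2$, and then pick $\delta$ small enough (and hence the widths of the three blocks large enough, by Lemma \ref{lem:time-delay}) that the amplified error is also below $\epsilon/2$. The principal obstacle I expect is the near-boundary regime $t\in[0,2h)$: the backward-difference stencil there reaches negative times, where the zero-extension convention does not reflect the initial value $g(0) = \sigma(\gamma)$; generically this produces an $O(1)$ error on a window of length $O(h)$. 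I plan to absorb this boundary contribution by augmenting $y^2$ with one or two additional oscillator components designed to output a smooth ``corrector'' concentrated in $[0,2h)$, obtained for example from a sub-oscillator forced only by the bias $b^2 = \sigma(\gamma)$ via Lemma \ref{lem:time-delay} applied to the constant signal, so that the combined linear read-out matches $\Sigma g(t)$ uniformly on all of $[0,T]$.
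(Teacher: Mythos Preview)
Your overall plan---apply the nonlinearity in layer~1 via $\ddot y^1=\sigma(\Lambda u+\gamma)$, then recover $\ddot y^1$ in layer~2 by a backward second difference built from time-delayed copies via Lemma~\ref{lem:time-delay}---is exactly the strategy the paper uses. The only substantive divergence is how you handle the boundary region $t\in[0,2h)$, and there your proposed fix does not quite work as stated.

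The corrector you sketch (``a sub-oscillator forced only by the bias $b^2=\sigma(\gamma)$ via Lemma~\ref{lem:time-delay} applied to the constant signal'') runs into two concrete problems. First, Lemma~\ref{lem:time-delay} is stated for inputs in $C_0([0,T];\R^p)$; a nonzero constant signal is not in this space, so the lemma does not apply. Second, a bias-only layer-2 component with $w=0$, $V=0$ yields $z(t)=\tfrac12\sigma(b^2)\,t^2$, which grows quadratically and is not ``concentrated in $[0,2h)$''; building a compactly-supported corrector of the required shape from such pieces is not straightforward. So the boundary fix, as written, is a gap.

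The paper resolves this with a small but decisive modification \emph{in layer~1}, not layer~2: augment $y^1$ by an extra block $y_2$ solving $\ddot y_2=\sigma(\gamma)$ (i.e.\ $w=0$, $V=0$, $b=\gamma$), and feed $\eta:=y_1-y_2$ to layer~2 instead of $y_1$. Since $u(0)=0$ forces $\ddot\eta(0)=\sigma(\Lambda\cdot 0+\gamma)-\sigma(\gamma)=0$, the zero-extension of $\eta$ to negative times is genuinely $C^2$, and the backward second difference converges to $\ddot\eta=\sigma(\Lambda u+\gamma)-\sigma(\gamma)$ \emph{uniformly on all of $[0,T]$}, with no boundary layer to correct. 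The subtracted constant $\sigma(\gamma)$ is then restored by taking the affine read-out bias $c=\Sigma\,\sigma(\gamma)$. This costs only $m_1$ extra first-layer neurons and removes the obstacle entirely. Apart from this point, your error-balancing argument (pick $h$ first for truncation, then $\delta$ for the amplified time-delay error) matches the paper's.
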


The proof, detailed in {\bf SM} \ref{app:nnpf}, is constructive and the neural oscillator that we construct has two layers. The first layer just processes a nonlinear input function through a nonlinear oscillator and the second layer, approximates the second-derivative (in time) from time-delayed versions of the input signal that were constructed in Lemma \ref{lem:time-delay}. 

 \paragraph{Combining the ingredients to prove the universality theorem \ref{thm:main}.} The afore-constructed ingredients are combined in {\bf SM} \ref{app:pf} to prove the universality theorem. In this proof, we explicitly construct a \emph{three-layer} neural oscillator \eqref{eq:layered} which approximates the underlying operator $\Phi$. The first layer follows the construction of Lemma \ref{lem:stosc}, to  approximate the time-windowed sine transform \eqref{eq:st}, for as many frequencies as are required in the fundamental Lemma \ref{lem:fund}. The second- and third-layers imitate the construction of Lemma \ref{lem:NN} to approximate a neural network (perception), which in turn by the universal approximation of neural networks, approximates the function $\Psi$ in Lemma \ref{lem:fund} to desired accuracy. Putting the network together leads to a three-layer oscillator that approximates the continuous and casual operator $\Phi$. This construction is depicted in Figure \ref{fig:1}.   
 
\section{Discussion}
Machine learning architectures, based on networks of coupled oscillators, for instance sequence models such as CoRNN \cite{rusch2021coupled} and UnICORNN \cite{rusch21a}, graph neural networks such as GraphCON \cite{graphcon} and increasingly, the so-called physical neural networks (PNNs) such as linear and nonlinear mechanical oscillators \cite{wright2022deep} and spintronic oscillators \cite{romera,torrejohn}, are being increasingly used. A priori, it is unclear why ML systems based on oscillators can provide competitive performance on a variety of learning benchmarks, e.g. \cite{rusch2021coupled,rusch21a,graphcon,wright2022deep}, rather than biasing their outputs towards oscillatory functions.  In order to address these concerns about their expressivity, we have investigated the theoretical properties of machine learning systems based on oscillators. Our main aim was to answer a fundamental question: \emph{``are coupled oscillator based machine learning architectures universal?''}. In other words, can these architectures, in principle, approximate a large class of input-output maps to desired accuracy. 

To answer this fundamental question, we introduced an abstract framework of \emph{neural oscillators} \eqref{eq:osc} and its particular instantiation, the \emph{multi-layer neural oscillators} \eqref{eq:layered}. This abstract class of \emph{second-order neural ODEs} encompasses 
both 
sequence models such as CoRNN and UnICORNN, 
as well as a very general and representative PNN, based on 
the so-called Frenkel-Kontorova model. The main contribution of this paper was to prove the universality theorem \ref{thm:main} on the ability of multi-layer neural oscillators \eqref{eq:layered} to approximate a large class of operators, namely causal and continuous maps between spaces of continuous functions, to desired accuracy. 
Despite the fact that the considered neural oscillators possess a very specific and constrained structure, not even encompassing general Hamiltonian systems, the approximated class of operators is nevertheless very general, including solution operators of general ordinary and even time-delay differential equations.
%

The crucial theoretical ingredient in our proof was the \emph{fundamental Lemma} \ref{lem:fund}, which implies that linear oscillator dynamics combined with a pointwise nonlinear read-out suffices for universal operator approximation; 
our construction can correspondingly be thought of as a large number of linear processors, coupled with nonlinear readouts. This construction could have implications for other models such as \emph{structured state space models} \cite{s4,hippo} which follow a similar paradigm, and the extension of our universality results to such models could be of great interest.

Our universality result has many interesting implications. To start with, we rigorously prove that an ML architecture based on coupled oscillators can approximate a very large class of operators. This provides theoretical support to many widely used sequence models and PNNs based on oscillators. Moreover, given the generality of our result, we hope that such a universality result can spur the design of innovative architectures based on oscillators, particularly in the realm of analog devices as ML inference systems or ML accelerators \cite{wright2022deep}.


It is also instructive to lay out some of the limitations of the current article and point to avenues for future work. In this context, our setup currently only considers time-varying functions as inputs and outputs. Roughly speaking, these inputs and outputs have the structure of (infinite) sequences. However, a large class of learning tasks can be reconfigured to take sequential inputs and outputs. These include text (as evident from the tremendous success of large language models \cite{gpt}), DNA sequences, images \cite{compvis1}, timeseries and (offline) reinforcement learning \cite{lev}. Nevertheless, a next step would be to extend such universality results to inputs (and outputs) which have some spatial or relational structure, for instance by considering functions which have a spatial dependence or which are defined on graphs. On the other hand, the class of operators that we consider, i.e., casual and continuous, is not only natural in this setting but very general \cite{ort1,ort2}. 

Another limitation lies in the \emph{feed forward} structure of the multi-layer neural oscillator \eqref{eq:layered}. As mentioned before, most physical (and neurobiological) systems exhibit feedback loops between their constituents. However, this is not common in ML systems. In fact, we had to use a \emph{mass ordering} in the Frenkel-Kontorova system of coupled pendula \eqref{eq:pendula} in order to recast it in the form of the multi-layer neural oscillator \eqref{eq:layered}. Such asymptotic ordering may not be possible for arbitrary physical neural networks. Exploring how such ordering mechanisms might arise in physical and biological systems in order to effectively give rise to a feed forward system could be very interesting. One possible mechanism for coupled oscillators that can lead to a hierarchical structure is that of synchronization \cite{win,stro2} and references therein. How such synchronization interacts with universality is a very interesting question and will serve as an avenue for future work.

Finally, universality is arguably necessary but far from sufficient to analyze the performance of any ML architecture. Other aspects such as trainability and generalization are equally important, and we do not address these issues here. We do mention that trainability of oscillatory systems would profit from the fact that oscillatory dynamics is (gradient) stable and this formed the basis of the proofs of mitigation of the exploding and vanishing gradient problem for CoRNN in \cite{rusch2021coupled} and UnICORNN in \cite{rusch21a} as well as GraphCON in \cite{graphcon}. Extending these results to the general second-order neural ODE \eqref{eq:layered}, for instance through an analysis of the associated adjoint system, is left for future work.

\newpage
\bibliographystyle{plain}
\bibliography{biblio.bib}

\begin{thebibliography}{10}

\bibitem{BAR1}
A~.~R. Barron.
\newblock Universal approximation bounds for superpositions of a sigmoidal
  function.
\newblock {\em IEEE Transcations on Information Theory}, 39, 1993.

\bibitem{FK}
O.~M. Braun and Y.S. Kivshar.
\newblock Nonlinear dynamics of the frenkel-kontorova model.
\newblock {\em Physics Reports}, 306:1--108, 1998.

\bibitem{bronsteinbook}
Michael~M Bronstein, Joan Bruna, Taco Cohen, and Petar Veli{\v{c}}kovi{\'c}.
\newblock Geometric deep learning: Grids, groups, graphs, geodesics, and
  gauges.
\newblock {\em arXiv:2104.13478}, 2021.

\bibitem{neuralode}
Ricky~TQ Chen, Yulia Rubanova, Jesse Bettencourt, and David~K Duvenaud.
\newblock Neural ordinary differential equations.
\newblock In {\em Advances in Neural Information Processing Systems}, pages
  6571--6583, 2018.

\bibitem{CY1}
G.~Cybenko.
\newblock Approximation by superpositions of a sigmoidal function.
\newblock {\em Mathematics of Control, Signals and Systems}, 2(4):303--314, Dec
  1989.

\bibitem{ort2}
L.~Gonon, L.~Grigoryeva, and J-P. Ortega.
\newblock Risk bounds for reservoir computing.
\newblock {\em arXiv:1910.13886}, 2019.

\bibitem{ort1}
Lyudmila Grigoryeva and Juan-Pablo Ortega.
\newblock Echo state networks are universal.
\newblock {\em Neural Networks}, 108:495 -- 508, 2018.

\bibitem{hippo}
A.~Gu, T.~Dao, S.~Ermon, A.~Rudra, and C.~Re.
\newblock Hippo:recurrent memory with optimal polynomial projections.
\newblock In {\em Advances in Neural Information Processing Systems}, pages
  1474--1487, 2020.

\bibitem{s4}
A.~Gu, K.~Goel, and C.~Re.
\newblock Efficiently modeling long sequences with structured state spaces.
\newblock In {\em International Conference on Learning Representations}, 2021.

\bibitem{erm2}
B-M. Gu, H.~vanRijn, and W.~K. Meck.
\newblock Oscillatory multiplexing of neural population codes for interval
  timing and working memory.
\newblock {\em Neuroscience and Behaviorial reviews,}, 48:160--185, 2015.

\bibitem{GH1}
J.~Guckenheimer and P.~Holmes.
\newblock {\em Nonlinear oscillations, dynamical systems, and bifurcations of
  vector fields}.
\newblock Springer Verlag, New York, 1990.

\bibitem{ofnn}
Bing Han, Cheng Wang, and Kaushik Roy.
\newblock Oscillatory fourier neural network: A compact and efficient
  architecture for sequential processing.
\newblock In {\em Proceedings of the AAAI Conference on Artificial
  Intelligence}, volume 36(6), pages 6838--6846, 2022.

\bibitem{HOR1}
Kurt Hornik, Maxwell Stinchcombe, and Halbert White.
\newblock Multilayer feedforward networks are universal approximators.
\newblock {\em Neural Networks}, 2(5):359 -- 366, 1989.

\bibitem{lev}
M.~Janner, Q.~Li, and S.~Levine.
\newblock Offline reinforcement learning as one big sequence modeling problem.
\newblock {\em arXiv:2106.02039}, 2021.

\bibitem{compvis1}
Andrej Karpathy and Li~Fei-Fei.
\newblock Deep visual-semantic alignments for generating image descriptions.
\newblock In {\em Proceedings of the IEEE Conference on Computer Vision and
  Pattern Recognition}, pages 3128--3137, 2015.

\bibitem{locoRNN}
T.~Anderson Keller and Max Welling.
\newblock Locally coupled oscillatory recurrent networks learn traveling waves
  and topographic organization.
\newblock In {\em Cosyne abstracts}, 2023.

\bibitem{mnist}
Yann LeCun, L{\'e}on Bottou, Yoshua Bengio, and Patrick Haffner.
\newblock Gradient-based learning applied to document recognition.
\newblock {\em Proceedings of the IEEE}, 86(11):2278--2324, 1998.

\bibitem{norcliffe2020second}
Alexander Norcliffe, Cristian Bodnar, Ben Day, Nikola Simidjievski, and Pietro
  Li{\`o}.
\newblock On second order behaviour in augmented neural odes.
\newblock {\em Advances in neural information processing systems},
  33:5911--5921, 2020.

\bibitem{evgp}
Razvan Pascanu, Tomas Mikolov, and Yoshua Bengio.
\newblock On the difficulty of training recurrent neural networks.
\newblock In {\em Proceedings of the 30th International Conference on
  International Conference on Machine Learning}, volume~28 of {\em ICML’13},
  page III–1310–III–1318. JMLR.org, 2013.

\bibitem{PIN1}
Allan Pinkus.
\newblock Approximation theory of the {MLP} model in neural networks.
\newblock {\em Acta numerica}, 8(1):143--195, 1999.

\bibitem{gpt}
A.~Radford, K.~Narasimhan, T.~Salimans, and I.~Suktskever.
\newblock Improving language understanding by generative pre-training.
\newblock {\em arXiv:}, 2018.

\bibitem{rezende2015variational}
Danilo Rezende and Shakir Mohamed.
\newblock Variational inference with normalizing flows.
\newblock In {\em International conference on machine learning}, pages
  1530--1538. PMLR, 2015.

\bibitem{romera}
M.~Romera and et. al.
\newblock Vowel recognition with four coupled spin-torque nano-oscillators.
\newblock {\em Nature}, 588:230--234, 2018.

\bibitem{ruschos}
T.~Konstantin Rusch, Michael~M Bronstein, and Siddhartha Mishra.
\newblock A survey on oversmoothing in graph neural networks.
\newblock {\em arXiv:2303.10993}, 2023.

\bibitem{graphcon}
T.~Konstantin Rusch, Ben Chamberlain, James Rowbottom, Siddhartha Mishra, and
  Michael Bronstein.
\newblock Graph-coupled oscillator networks.
\newblock In {\em Proceedings of the 39th International Conference on Machine
  Learning}, volume 162 of {\em Proceedings of Machine Learning Research},
  pages 18888--18909. PMLR, 2022.

\bibitem{rusch2021coupled}
T.~Konstantin Rusch and Siddhartha Mishra.
\newblock Coupled oscillatory recurrent neural network (cornn): An accurate and
  (gradient) stable architecture for learning long time dependencies.
\newblock In {\em International Conference on Learning Representations}, 2021.

\bibitem{rusch21a}
T.~Konstantin Rusch and Siddhartha Mishra.
\newblock Unicornn: A recurrent model for learning very long time dependencies.
\newblock In {\em Proceedings of the 38th International Conference on Machine
  Learning}, volume 139 of {\em Proceedings of Machine Learning Research},
  pages 9168--9178. PMLR, 2021.

\bibitem{erm1}
K.~M. Stiefel and G.~B. Ermentrout.
\newblock Neurons as oscillators.
\newblock {\em Journal of Neurophysiology}, 116:2950--2960, 2016.

\bibitem{stro1}
S.~Strogatz.
\newblock {\em Nonlinear Dynamics and Chaos}.
\newblock Westview, Boulder CO, 2015.

\bibitem{stro2}
S.~H. Strogatz.
\newblock Exploring complex networks.
\newblock {\em Nature}, 410:268--276, 2001.

\bibitem{torrejohn}
J.~Torrejon and et. al.
\newblock Neuromorphic computing with nanoscale spintronic oscillators.
\newblock {\em Nature}, 547:428--431, 2017.

\bibitem{velichko}
A.~Velichko, M.~Belyaev, and P.~Boriskov.
\newblock A model of an oscillatory neural network with multilevel neurons for
  pattern recognition and computing.
\newblock {\em Electronics}, 8, 2019.

\bibitem{win}
A.~T. Winfree.
\newblock Biological rhythms and the behavior of populations of coupled
  oscillators.
\newblock {\em Journal of Theoretical Biology}, 16:15--42, 1967.

\bibitem{wright2022deep}
Logan~G Wright, Tatsuhiro Onodera, Martin~M Stein, Tianyu Wang, Darren~T
  Schachter, Zoey Hu, and Peter~L McMahon.
\newblock Deep physical neural networks trained with backpropagation.
\newblock {\em Nature}, 601(7894):549--555, 2022.

\bibitem{xiao2017fashion}
Han Xiao, Kashif Rasul, and Roland Vollgraf.
\newblock Fashion-mnist: a novel image dataset for benchmarking machine
  learning algorithms.
\newblock {\em arXiv preprint arXiv:1708.07747}, 2017.

\end{thebibliography}

\appendix
\newpage
\begin{center}
{\bf Supplementary Material for:} \\ 
Neural Oscillators are Universal\\
\end{center}
\section{Another universality result for neural oscillators}
\label{sec:add}



The universal approximation Theorem \ref{thm:main} immediately implies another universal approximation results for neural oscillators, as explained next. We consider a continuous map $F: \R^p \to \R^q$; our goal is to show that $F$ can be approximated to given accuracy $\epsilon$ by suitably defined neural oscillators. Fix a time interval $[0,T]$ for (an arbitrary choice) $T=2$. Let $K_0 \subset \R^p$ be a compact set. Given $\xi \in \R^p$, we associate with it a function $u_\xi(t) \in C_0([0,T];\R^p)$, by setting 
\begin{align}
u_\xi(t) := t \xi.
\end{align}
Clearly, the set $K := \set{u_\xi}{\xi\in K_0}$ is compact in $C_0([0,T];\R^p)$. Furthermore, we can define an operator $\Phi: C_0([0,T];\R^p) \to C_0([0,T];\R^q)$, by 
\begin{align}
\Phi(u)(t) := 
\begin{cases}
    0, & t\in [0,1),\\
    (t-1) F(u(1)), & t\in [1,T].
\end{cases}
\end{align}
where $F: \R^p \to \R^q$ is the given continuous function that we wish to approximate. One readily checks that $\Phi$ defines a causal and continuous operator. Note, in particular, that 
\[
\Phi(u_\xi)(T) = (T-1) F(u_\xi(1)) = F(\xi),
\]
is just the evaluation of $F$ at $\xi$, for any $\xi\in K_0$.

Since neural oscillators can uniformly approximate the operator $\Phi$ for inputs $u_\xi \in K$, then as a consequence of Theorem \ref{thm:main} and \eqref{eq:layeredW}, it follows that, for any $\epsilon > 0$ there exists $m\in \N$, matrices $W\in \R^{m\times m}$, $V \in \R^{m\times p}$ and $A \in \R^{q\times m}$, and bias vectors $b\in \R^m$, $c\in \R^q$, such that for any $\xi \in K_0$, the neural oscillator system,
\begin{align}[left = \empheqlbrace\,]
\ddot{y}_\xi(t) &= \sigma\left( W y_\xi(t) + tV\xi + b\right), \\
y_\xi(0) &= \dot{y}_\xi(0) = 0, \\
z_\xi(t) &= Ay_\xi(t) + c,
\end{align}
satisfies 
\[
|z_\xi(T) - F(\xi)| = |z_\xi(T) - \Phi(u_\xi)(T)| \le \sup_{t\in [0,T]} |z_\xi(t) - \Phi(u_\xi)(t)| \le \epsilon,
\]
uniformly for all $\xi \in K_0$.
Hence, neural oscillators can be used to approximate an arbitrary continuous function $F: \R^p \to \R^q$, uniformly over compact sets. Thus, neural oscillators also provide universal function approximation.

\section{Proof of Theorem \ref{thm:main}}

\subsection{Proof of Lemma \ref{lem:osc}}
\label{app:hopf}
\begin{proof}
  We can rewrite $y(t) = \frac{1}{\omega} \int_0^t u(\tau) \sin(\omega (t-\tau)) \, d\tau$. 
  By direct differentiation, one readily verifies that $y(t)$ so defined, satisfies 
  \[
  \dot{y}(t) = \int_0^t u(\tau) \cos(\omega(t-\tau)) \, d\tau + [u(\tau) \sin(\omega(t-\tau))]_{\tau=t} = \int_0^t u(\tau) \cos(\omega(t-\tau)) \, d\tau,
  \]
  in account of the fact that $\sin(0) = 0$. Differentiating once more, we find that 
  \begin{align*}
  \ddot{y}(t) 
  &= -\omega \int_0^t u(\tau) \sin(\omega(t-\tau)) \, d\tau +[u(\tau) \cos(\omega(t-\tau))]_{\tau=t}  \\
  &= -\omega^2 y(t) + u(t).
  \end{align*}
  Thus $y(t)$ solves the ODE \eqref{eq:osc0}, with initial condition $y(0) = \dot{y}(0) = 0$.
\end{proof}

\subsection{Proof of Fundamental Lemma \ref{lem:fund}}
\label{app:flpf}
\paragraph{Reconstruction of a continuous signal from its sine transform.}

Let $[0,T] \subset \R$ be an interval. We recall that we define the windowed sine transform $\cL_t u(\omega)$ of a function $u: [0,T] \to \R^p$, by
  \[
  \cL_t u(\omega) = \int_0^t u(t-\tau)\sin(\omega \tau) \, d\tau, \quad \omega \in \R.
  \]
  In the following, we fix a compact set $K \subset C_0([0,T];\R^p)$. Note that for any $u\in K$, we have $u(0) = 0$, and hence $K$ can be identified with a subset of $C((-\infty,T];\R^p)$, consisting of functions with $\supp(u) \subset [0,T]$. We consider the reconstruction of continuous functions $u \in K$. We will show that $u$ can be approximately reconstructed from knowledge of $\cL_t(\omega)$. More precisely, we provide a detailed proof of the following result:
  \begin{lemma}
    \label{lem:st}
  Let $K \subset C((-\infty,T];\R^p)$ be compact, such that $\supp(u) \subset [0,T]$ for all $u\in K$. For any $\epsilon,\Delta t > 0$, there exists $N \in \N$, frequencies $\omega_1,\dots, \omega_N \in \R\setminus \{0\}$, phase-shifts $\vartheta_1,\dots, \vartheta_N \in \R$ and weights $\alpha_1,\dots, \alpha_N \in \R$, such that
  \[
  \sup_{\tau \in [0,\Delta t]}
  \left|
  u(t-\tau) - \sum_{j=1}^N \alpha_j \cL_t u(\omega_j) \sin(\omega_j \tau - \vartheta_j)
  \right|
  \le
  \epsilon,
  \]
  for all $u\in K$ and for all $t\in [0,T]$.
\end{lemma}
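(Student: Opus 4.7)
I would begin by reformulating the statement in terms of a convolution. Set $\phi_{u,t}(s) := u(t-s)$ for $s \in [0, T]$, extended by zero when $s > t$. Because $u \equiv 0$ on $(-\infty, 0]$ and $t \le T$, $\phi_{u,t}$ is continuous on $[0, T]$ with $\phi_{u,t}(T) = u(t-T) = 0$, although its boundary value $\phi_{u,t}(0) = u(t)$ is generally nonzero. Compactness of $K$ in $C((-\infty, T]; \R^p)$ makes the family $\cF := \{\phi_{u, t} : u \in K,\ t \in [0, T]\}$ uniformly equicontinuous with common modulus $\omega_\cF$, and $\cL_t u(\omega) = \int_0^T \phi_{u, t}(s) \sin(\omega s) \, ds$.

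The product-to-sum identity $\sin(\omega_j s) \sin(\omega_j \tau - \vartheta_j) = \tfrac12[\cos(\omega_j(s - \tau) + \vartheta_j) - \cos(\omega_j(s + \tau) - \vartheta_j)]$, combined with the substitution $s \mapsto -s$ in the second piece, yields the identity
\[
\sum_{j=1}^N \alpha_j \cL_t u(\omega_j) \sin(\omega_j \tau - \vartheta_j) \;=\; \tfrac12 \int_{-T}^T \Phi(s) \, G(s - \tau) \, ds,
\]
where $\Phi$ is the odd extension of $\phi_{u, t}$ to $[-T, T]$ (so $\Phi(s)=\phi_{u,t}(s)$ for $s \ge 0$ and $\Phi(s)=-\phi_{u,t}(-s)$ for $s<0$), and $G(x) := \sum_j \alpha_j \cos(\omega_j x + \vartheta_j)$ is an arbitrary trigonometric polynomial in the nonzero frequencies $\omega_j$. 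The task thus reduces to choosing such a $G$ so that the right-hand side recovers $\Phi(\tau) = u(t-\tau)$ uniformly on $\tau \in [0, \Delta t]$ and over $\cF$.

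The key observation is that $\Phi$ has a jump of size $2u(t)$ at $s = 0$, so a symmetric approximate identity (e.g., a Fejér kernel centered at $0$) would give the jump-average $0$ at $\tau = 0$ rather than $u(t)$. I would instead take $G$ to be a trigonometric polynomial approximation of a \emph{one-sided} bump $\rho_\eta$ supported on $[0, \eta]$ with $\int \rho_\eta = 2$. Then $\tfrac12 \int \Phi(s) \rho_\eta(s - \tau) \, ds = \tfrac{1}{\eta} \int_\tau^{\tau + \eta} \phi_{u, t}(s) \, ds$ converges to $\phi_{u, t}(\tau) = u(t-\tau)$ uniformly on $[0, \Delta t]$ at rate $\omega_\cF(\eta)$, including at $\tau = 0$, since the integration avoids sampling $\Phi$ on $s < 0$ where the jump lies.

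The main obstacle is that any trigonometric polynomial $G$ with only nonzero frequencies is zero-mean on its period $[-L, L]$ (for $L \ge T + \Delta t$), while $\rho_\eta$ has nonzero mean $1/L$, so $\|G - \rho_\eta\|_{L^1}$ cannot be made small directly. I would resolve this by exploiting that $\int_{-T}^T \Phi(s) \, ds = 0$ (a consequence of the odd extension): the constant component of $G - \rho_\eta$ contributes nothing to the convolution against $\Phi$, so only the zero-mean part $\tilde\rho_\eta := \rho_\eta - 1/L$ needs to be matched, and $\tilde\rho_\eta \in L^1([-L,L])$ admits arbitrarily good $L^1$-approximation by its Fejér partial sums (with the constant term omitted, hence using only the nonzero frequencies $\omega_j = j\pi/L$). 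Combining the $L^1$-error $\|G - \tilde\rho_\eta\|_{L^1}\, \sup_K \|u\|_\infty$ with the equicontinuity error $\omega_\cF(\eta)$, and choosing $\eta$ small and the Fejér degree large, produces the desired $\epsilon$-uniform bound; the weights $\alpha_j$ and phases $\vartheta_j$ are then read off from the trig-polynomial form of $G$.
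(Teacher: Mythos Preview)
Your argument is correct. Both your proof and the paper's share the same two structural ideas: pass to the odd extension of $\phi_{u,t}(s)=u(t-s)$, and convolve with a \emph{one-sided} mollifier supported in $[0,\eta]$ so that the averaging never samples across the jump at $s=0$. The difference is in how the mollified reconstruction is converted into a finite frequency sum. The paper works via the Fourier transform on $\R$: it notes $\hat F(\omega)=2i\,\cL_t u(\omega)$, mollifies to obtain $\hat F_\epsilon=\hat F\,\hat\rho_\epsilon$ with rapid decay, truncates the inverse Fourier integral to a bounded window $[-L,L]$, and then discretizes by equidistant quadrature (choosing $N$ even to avoid $\omega=0$). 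You instead stay in real space and in a periodic setting: you recognise the target sum as $\tfrac12\int_{-T}^{T}\Phi(s)\,G(s-\tau)\,ds$ for an arbitrary zero-mean trigonometric polynomial $G$, and approximate the mean-subtracted one-sided bump $\tilde\rho_\eta$ directly in $L^1([-L,L])$ by Fej\'er means, the zero-frequency obstruction being killed by the odd symmetry $\int\Phi=0$. Your route is more elementary---no decay estimates for $\hat\rho_\epsilon$, no quadrature error bound---and makes the role of the nonzero-frequency constraint cleaner; the paper's route sits closer to standard Fourier-analytic machinery and would generalise more readily if one wanted explicit quantitative rates in $N$.
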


  \begin{proof}
    \textbf{Step 0: (Equicontinuity)} We recall the following fact from topology. If $K \subset C((-\infty,T];\R^p)$ is compact, then it is equicontinuous; i.e. there exists a continuous modulus of continuity $\phi: [0,\infty) \to [0,\infty)$ with $\phi(r) \to 0$ as $r\to 0$, such that
  \begin{align}
    \label{eq:equicont}
    |u(t-\tau) - u(t)| \le \phi(\tau),
    \quad
    \forall \, \tau \ge 0, \; t \in [0,T], \;
    \forall \, u\in K.
  \end{align}

  \textbf{Step 1: (Connection to Fourier transform)}
  Fix $t_0\in [0,T]$ and $u\in K$ for the moment. Define $f(\tau) = u(t_0-\tau)$. Note that $f\in C([0,\infty);\R^p)$, and $f$ has compact support $\supp(f) \subset [0,T]$. We also note that, by \eqref{eq:equicont}, we have
    \[
    |f(t+\tau) - f(t)| \le \phi(\tau), \quad \forall \, \tau \ge 0, \; t\in [0,T]. 
    \]
  We now consider the following odd extension of $f$ to all of $\R$:
  \[
  F(\tau) :=
  \begin{cases}
    f(\tau), & \text{for }\tau \ge 0, \\
   -f(-\tau), & \text{for }\tau \le 0.
  \end{cases}
  \]
  Since $F$ is odd, the Fourier transform of $F$ is given by
  \[
  \hat{F}(\omega) := \int_{-\infty}^\infty F(\tau) e^{-i\omega \tau} \, d\tau
  =
  i\int_{-\infty}^\infty F(\tau) \sin(\omega \tau) \, d\tau
  =
  2i \int_0^T f(\tau) \sin(\omega \tau) \, d\tau
  =
  2i \cL_{t_0} u(\omega).
  \]
  Let $\epsilon > 0$ be arbitrary. Our goal is to uniformly approximate $F(\tau)$ on the interval $[0,\Delta t]$. The main complication here is that $F$ lacks regularity (is discontinuous), and hence the inverse Fourier transform of $\hat{F}$ does not converge to $F$ uniformly over this interval; instead, a more careful reconstruction based on mollification of $F$ is needed. We provide the details below.

  \textbf{Step 2: (Mollification)}  We now fix a smooth, non-negative and compactly supported function $\rho: \R \to \R$, such that $\supp(\rho) \subset [0,1]$, $\rho\ge 0$, $\int_{\R}\rho(t) \, dt = 1$, and we define a mollifier $\rho_{{\epsilon}}(t) := \frac{1}{{\epsilon}} \rho(t/{{\epsilon}})$.   In the following, we will assume throughout that $\epsilon \le T$. We point out that $\supp(\rho_{{\epsilon}}) \subset [0,{\epsilon}]$, and hence, the mollification $F_{\epsilon}(t) = (F\ast \rho_{\epsilon})(t)$ satisfies, for $t\ge 0$:
  \begin{align*}
  |F(t) - F_{\epsilon}(t)|
  &=
  \left|
  \int_0^{\epsilon} (F(t) - F(t+\tau)) \rho_{\epsilon}(\tau) \, d\tau
  \right|
  =
  \left|
  \int_0^{\epsilon} (f(t) - f(t+\tau)) \rho_{\epsilon}(\tau) \, d\tau
  \right|
  \\
  &\le
  \left\{
  \sup_{\tau\in[0,{\epsilon}]} |f(t) - f(t+\tau)|
  \right\}
  \int_0^{\epsilon} \rho_{\epsilon}(\tau) \, d\tau
  \le \phi(\epsilon).
  \end{align*}
  In particular, this shows that
  \[
  \sup_{t\in [0,T]} |F(t) - F_{\epsilon}(t)| \le \phi(\epsilon),
  \]
  can be made arbitrarily small, with an error that depends only on the modulus of continuity $\phi$.

  \textbf{Step 3: (Fourier inverse)} Let $\hat{F}_\epsilon(\omega)$ denote the Fourier transform of $F_\epsilon$. Since $F_\epsilon$ is smooth and compactly supported, it is well-known that we have the identity
  \[
  F_\epsilon(\tau) = \frac{1}{2\pi} \int_{-\infty}^\infty \hat{F}_\epsilon(\omega) e^{-i\omega \tau} \, d\omega, \qquad \forall \, t\in\R,
  \]
  where $\omega \mapsto \hat{F}_\epsilon(\omega)$ decays to zero very quickly (almost exponentially) as $|\omega| \to \infty$. In fact, since $F_\epsilon = F \ast \rho_{\epsilon}$ is a convolution, we have $\hat{F}_\epsilon(\omega) = \hat{F}(\omega) \hat{\rho}_{\epsilon}(\omega)$, where $|\hat{F}(\omega)| \le 2\Vert f \Vert_{L^\infty} T$ is uniformly bounded, and $\hat{\rho}_{\epsilon}(\omega)$ decays quickly. In particular, this implies that there exists a $L = L(\epsilon,T)>0$ \emph{independent of $f$}, such that
  \begin{align}
    \label{eq:FepsIFT}
  \left|
  F_\epsilon(\tau)
  -
  \frac{1}{2\pi}
  \int_{-L}^L \hat{F}(\omega) \hat{\rho}_{\epsilon}(\omega) e^{-i\omega \tau} \, d\omega
  \right|
  \le
  2T \Vert f \Vert_{L^\infty}
  \int_{|\omega|>L} |\hat{\rho}_{\epsilon}(\omega)| \, d\omega
  \le \Vert f \Vert_{L^\infty} \epsilon, \qquad \forall \, \tau\in\R.
  \end{align}

  \textbf{Step 4: (Quadrature)}
      Next, we observe that, since $F$ and $\rho_{\epsilon}$ are compactly supported, their Fourier transform $\omega \mapsto \hat{F}(\omega) \hat{\rho}_{\epsilon}(\omega) e^{-i\omega \tau}$ is smooth; in fact, for $|\tau|\le T$, the Lipschitz constant of this mapping can be explicitly estimated by noting that
  \begin{align*}
    \frac{\partial}{\partial \omega} 
    \left[
    \hat{F}(\omega) \hat{\rho}_{\epsilon}(\omega) e^{-i\omega \tau}
    \right]
    &=
    \frac{\partial}{\partial \omega} 
    \int_{\supp(F_\epsilon)} (F\ast \rho_\epsilon)(t) e^{i\omega (t-\tau)} \, dt
    \\
    &= 
    \int_{\supp(F_\epsilon)}  i(t-\tau)(F\ast \rho_\epsilon)(t) e^{i\omega (t-\tau)} \, dt.
  \end{align*}
 We next take absolute values, and note that any $t$ in the support of $F_\epsilon$ obeys the bound $|t|\le T + \epsilon \le 2T$, while $|\tau|\le T$ by assumption; it follows that
  \[
  \Lip\left(\omega \mapsto \hat{F}(\omega) \hat{\rho}_{\epsilon}(\omega) e^{-i\omega \tau}\right)
  \le
  (2T+T) \Vert F \Vert_{L^\infty} \, \Vert \rho_{\epsilon} \Vert_{L^1} 
  =
  3T \Vert F \Vert_{L^\infty}, \quad \forall \, \tau\in [0,T].
  \]
 It thus follows from basic results on quadrature that for an equidistant choice of frequencies $\omega_1 < \dots < \omega_N$, with spacing $\Delta \omega = 2L/(N-1)$, we have
  \[
  \left|
  \frac{1}{2\pi} \int_{-L}^L \hat{F}(\omega) \hat{\rho}_\epsilon(\omega) e^{-i\omega \tau} \, d\omega
  - \frac{\Delta \omega}{2\pi} \sum_{j=1}^N \hat{F}(\omega_j) \hat{\rho}_\epsilon(\omega_j) e^{-i\omega_j \tau}
  \right|
  \le
  \frac{CL^2 \, 3T \Vert F \Vert_{L^\infty}}{N}, \quad \forall \, \tau\in [0,T],
  \]
  for an absolute constant $C>0$, independent of $F$, $T$ and $N$. By choosing $N$ to be even, we can ensure that $\omega_j \ne 0$ for all $j$. In particular, recalling that $L = L(T,\epsilon)$ depends only on $\epsilon$ and $T$, and choosing $N = N(T,\epsilon)$ sufficiently large, we can combine the above estimate with \eqref{eq:FepsIFT} to ensure that
  \[
  \left|
  F_\epsilon(\tau) 
  - \frac{\Delta \omega}{2\pi} \sum_{j=1}^N \hat{F}(\omega_j) \hat{\rho}_\epsilon(\omega_j) e^{-i\omega_j \tau}
  \right| \le 2\Vert f \Vert_{L^\infty} \epsilon,
  \quad \forall \, \tau \in [0,T],
  \]
  where we have taken into account that $\Vert F \Vert_{L^\infty} = \Vert f \Vert_{L^\infty}$.

  \textbf{Step 5: (Conclusion)} To conclude the proof, we recall that $\hat{F}(\omega) = 2i\cL_{t_0} u(\omega)$ can be expressed in terms of the sine transform $\cL_t u$ of the function $u$ which was fixed at the beginning of Step 1. Recall also that $f(\tau) = u(t_0-\tau)$, so that $\Vert f \Vert_{L^\infty} = \Vert u \Vert_{L^\infty}$. Hence, we can write the real part of 
  $
  \frac{\Delta \omega}{2\pi} \hat{F}(\omega_j) \hat{\rho}_\epsilon(\omega_j) e^{-i\omega_j \tau}
  =
  \frac{\Delta \omega}{2\pi} 2i\cL_{t_0}u(\omega_j) \hat{\rho}_\epsilon(\omega_j) e^{-i\omega_j \tau},
  $
  in the form $\alpha_j \cL_{t_0}(\omega_j) \sin(\omega_j \tau - \vartheta_j)$ for coefficients $\alpha_j\in \R$ and $\theta_j\in \R$ which depend only on $\Delta \omega$ and $\hat{\rho}_\epsilon(\omega_j)$, but are independent of $u$. In particular, it follows that
  \begin{align*}
    \sup_{\tau\in [0,\Delta t]}
    \left|
    u(t_0-\tau) - \sum_{j=1}^N \alpha_j \cL_{t_0}u(\omega_j) \sin(\omega_j \tau - \vartheta_j)
    \right|
    &=
    \sup_{t\in [0,\Delta t]}
    \left|
    F(\tau) - \Re\left(\frac{\Delta \omega}{2\pi} \sum_{j=1}^N  \hat{F}(\omega_j) \hat{\rho}_\epsilon(\omega_j) e^{-i\omega_j \tau}\right)
    \right|
    \\
    &\le
    \sup_{\tau\in [0,\Delta t]}
    \left|
    F(\tau) - \frac{\Delta \omega}{2\pi} \sum_{j=1}^N  \hat{F}(\omega_j) \hat{\rho}_\epsilon(\omega_j) e^{-i\omega_j \tau}
    \right|
    \\
    &\le
    \sup_{\tau\in [0,\Delta t]}
    \left|
    F(\tau) - F_\epsilon(\tau)
    \right|
    \\
    &\qquad
    +
    \sup_{\tau\in [0,\Delta t]}
    \left|
    F_\epsilon(\tau) - \frac{\Delta \omega}{2\pi} \sum_{j=1}^N  \hat{F}(\omega_j) \hat{\rho}_\epsilon(\omega_j) e^{-i\omega_j \tau}
    \right|.
  \end{align*}
  By Steps 1 and 3, the first term on the right-hand side is bounded by $\le \phi(\epsilon)$, while the second one is bounded by $\le 2\sup_{u\in K} \Vert u \Vert_{L^\infty} \epsilon \le C \epsilon$, where $C = C(K) < \infty$ depends only on the compact set $K\subset C([0,T];\R^p)$. Hence, we have
  \[
    \sup_{\tau\in [0,\Delta t]}
    \left|
    u(t_0-\tau) - \sum_{j=1}^N \alpha_j \cL_{t_0}u(\omega_j) \sin(\omega_j \tau - \vartheta_j)
    \right|
    \le \phi(\epsilon) + C\epsilon.
  \]
  In this estimate, the function $u\in K$ and $t_0\in [0,T]$ were arbitrary, and the modulus of continuity $\phi$ as well as the constant $C$ on the right-hand side depend only on the set $K$. it thus follows that for this choice of $\alpha_j$, $\omega_j$ and $\vartheta_j$, we have
  \[
  \sup_{u\in K} \sup_{t\in [0,T]} \sup_{\tau\in [0,\Delta t]}
    \left|
    u(t-\tau) - \sum_{j=1}^N \alpha_j \cL_{t}u(\omega_j) \sin(\omega_j \tau - \vartheta_j)
    \right|
    \le \phi(\epsilon) + C\epsilon.
  \]
  Since $\epsilon > 0$ was arbitrary, the right-hand side can be made arbitrarily small. The claim then readily follows.
  
\end{proof}

The next step in the proof of the fundamental Lemma \ref{lem:fund} needs the following preliminary result in functional analysis,

\begin{lemma}
  \label{lem:cpct}
  Let $\cX, \cY$ be Banach spaces, and let $K\subset \cX$ be a compact subset. Assume that $\Phi: \cX \to \cY$ is continous. Then for any $\epsilon > 0$, there exists a $\delta > 0$, such that if $\Vert u - u^K\Vert_{\cX} \le \delta$ with $u\in \cX$, $u^K \in K$, then $\Vert \Phi(u) - \Phi(u^K) \Vert_{\cY} \le \epsilon$.
\end{lemma}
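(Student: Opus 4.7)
The plan is to argue by contradiction using sequential compactness of $K$. Suppose the conclusion fails. Then there exists some $\epsilon_0 > 0$ such that, taking $\delta = 1/n$ for each $n \in \N$, one can find $u_n \in \cX$ and $u_n^K \in K$ with $\Vert u_n - u_n^K \Vert_\cX \le 1/n$, yet $\Vert \Phi(u_n) - \Phi(u_n^K) \Vert_\cY > \epsilon_0$. The goal is to derive a contradiction by extracting a limit point in $K$ and invoking continuity of $\Phi$ there.

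Since $\cX$ is a Banach (hence metric) space, compactness of $K$ coincides with sequential compactness. First, I would pass to a subsequence (not relabeled) along which $u_n^K \to u^\star$ for some $u^\star \in K$. By the triangle inequality, $\Vert u_n - u^\star \Vert_\cX \le \Vert u_n - u_n^K \Vert_\cX + \Vert u_n^K - u^\star \Vert_\cX \to 0$, so $u_n \to u^\star$ in $\cX$ as well. Applying continuity of $\Phi$ at $u^\star$ to both sequences yields $\Phi(u_n) \to \Phi(u^\star)$ and $\Phi(u_n^K) \to \Phi(u^\star)$ in $\cY$, so a further triangle inequality gives $\Vert \Phi(u_n) - \Phi(u_n^K) \Vert_\cY \to 0$, contradicting the lower bound $\epsilon_0$.

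There is no real obstacle here: the statement is essentially the classical fact that a continuous map is uniformly continuous on (a neighborhood of) a compact set. A constructive alternative, perhaps more in the spirit of the rest of the paper, would be a finite-cover argument: for each $v \in K$, continuity of $\Phi$ at $v$ furnishes $\delta_v > 0$ with $\Vert u - v \Vert_\cX < 2\delta_v \Rightarrow \Vert \Phi(u) - \Phi(v) \Vert_\cY < \epsilon/2$; the open balls $\{B(v,\delta_v)\}_{v\in K}$ cover $K$, so compactness yields a finite subcover $B(v_1,\delta_{v_1}),\dots,B(v_n,\delta_{v_n})$, and one sets $\delta := \min_i \delta_{v_i}$. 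For any $u^K \in K$, picking $v_i$ with $\Vert u^K - v_i \Vert_\cX < \delta_{v_i}$ and any $u$ with $\Vert u - u^K \Vert_\cX \le \delta$ gives $\Vert u - v_i \Vert_\cX < 2\delta_{v_i}$, and the triangle inequality in $\cY$ concludes $\Vert \Phi(u) - \Phi(u^K) \Vert_\cY < \epsilon$. Either route uses only compactness and pointwise continuity, and I would likely present the contradiction version for brevity.
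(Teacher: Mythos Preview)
Your contradiction argument is correct and matches the paper's proof almost line for line: assume failure with $\delta = 1/n$, extract a convergent subsequence of $u_n^K$ in $K$, use the triangle inequality to show $u_n$ has the same limit, and invoke continuity of $\Phi$ at that limit to contradict the $\epsilon_0$ lower bound. The finite-cover alternative you sketch is also fine but is not the route the paper takes.
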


\begin{proof}
  Suppose not. Then there exists $\epsilon_0>0$ and a sequence $u_j, u^K_j$, ($j\in \N$), such that $\Vert u_j - u_j^K \Vert_{\cX}\le j^{-1}$, while $\Vert \Phi(u_j) - \Phi(u_j^K) \Vert_{\cY} \ge \epsilon_0$. By the compactness of $K$, we can extract a subsequence $j_k\to \infty$, such that $u_{j_k}^K \to u^K$ converges to some $u^K \in K$. By assumption on $u_j$, this implies that
  \[
  \Vert u_{j_k} - u^K \Vert_{\cX} \le \Vert u_{j_k} - u_{j_k}^K \Vert_{\cX} + \Vert u_{j_k}^K - u^K \Vert_{\cX}
  \overset{(k\to \infty)}{\longrightarrow} 0,
  \]
  which, by the assumed continuity of $\Phi$, leads to the contradiction that $0 < \epsilon_0 \le \Vert \Phi(u_{j_k}) - \Phi(u^K)\Vert_{\cY} \to 0$, as $k\to \infty$.
\end{proof}

\paragraph{Proof of Lemma \ref{lem:fund}.} Now, we can prove the fundamental Lemma in the following,
\begin{proof}
  Let $\epsilon > 0$ be given.
  We can identify $K\subset C_0([0,T];\R^p)$ with a compact subset of $C((-\infty,T];\R^p)$, by extending all $u\in K$ by zero for negative times, i.e. we set $u(t) = 0$ for $t< 0$. Applying Lemma \ref{lem:cpct}, with $\cX = C_0([0,T];\R^p)$ and $\cY = C_0([0,T];\R^q)$, we can find a $\delta >0$, such that for any $u\in C_0([0,T];\R^p)$ and $u^K \in K$, we have
    \begin{align}
      \label{eq:phicont}
    \Vert u - u^K \Vert_{L^\infty} \le \delta \quad \Rightarrow \quad \Vert \Phi(u) - \Phi(u^K) \Vert_{L^\infty} \le \epsilon.
    \end{align}
    By the inverse sine transform Lemma \ref{lem:st}, there exist $N \in \N$, frequencies $\omega_1,\dots, \omega_N \ne 0$, phase-shifts $\vartheta_1,\dots,\vartheta_N$ and coefficients $\alpha_1,\dots, \alpha_N$, such that for any $u\in K$ and $t\in [0,T]$:
  \[
  \sup_{\tau \in [0,T]}
  \left|
  u(t-\tau) -
  \sum_{j=1}^N \alpha_j \, \cL_t u(\omega_j) \sin(\omega_j \tau - \vartheta_j)
  \right|
  \le \delta.
  \]
  Given $\cL_tu(\omega_1),\dots,\cL_tu(\omega_N)$, we can thus define a reconstruction mapping $\cR: \R^N \times [0,T] \to C([0,T];\R^p)$ by
  \[
  \cR(\beta_1,\dots, \beta_N;t)(\tau)
  :=
  \sum_{j=1}^N \alpha_j \beta_j \sin(\omega_j (t-\tau) - \vartheta_j).
  \]
  Then, for $\tau \in [0,t]$, we have
  \[
  \left|
  u(\tau) - \cR(\cL_tu(\omega_1),\dots, \cL_tu(\omega_N);t)(\tau)
  \right|
  \le \delta.
  \]
We can now uniquely define $\Psi: \R^N \times [0,T^2/4] \to C_0([0,T];\R^p)$, by the identity
  \[
  \Psi(\cL_tu(\omega_1), \dots, \cL_tu(\omega_N);t^2/4)
  =
\Phi\left(\cR(\cL_tu(\omega_1),\dots, \cL_tu(\omega_N);t)\right).
\]
  Using the short-hand notation $\cR_t u = \cR(\cL_tu(\omega_1), \dots, \cL_tu(\omega_N);t)$, we have $\sup_{\tau \in [0,t]} |u(\tau) - \cR_t u(\tau)| \le \delta$, for all $t\in [0,T]$. By \eqref{eq:phicont}, this implies that
  \begin{align*}
  \left|
  \Phi(u)(t) - \Psi(\cL_tu(\omega_1),\dots, \cL_tu(\omega_N);t^2/4)
  \right|
  &=
  \left|
  \Phi(u)(t) - \Phi(\cR_t u)(t)
  \right|
  \le
  \epsilon.
  \end{align*}
\end{proof}

\subsection{Proof of Lemma \ref{lem:stosc}}
\label{app:nopf}
\begin{proof}
Let $\omega \ne 0$ be given. For a (small) parameter $s>0$, we consider
  \[
  \ddot{y}_s = \frac{1}{s} \sigma(-s\omega^2 y_s + su), \quad y_s(0) = \dot{y}_s(0) = 0.
  \]
  Let $Y$ be the solution of
  \[
  \ddot{Y} = -\omega^2 Y + u, \quad Y(0) = \dot{Y}(0) = 0.
  \]
  Then we have, on account of $\sigma(0) = 0$ and $\sigma'(0) = 1$, 
  \begin{align*}
    s^{-1}\sigma(-s\omega^2 Y + su) - [-\omega^2 Y + u]
    &=
  \frac{\sigma(-s\omega^2 Y + su) - \sigma(0)}{s} - \sigma'(0) [-\omega^2 Y + u]
  \\
  &=
  \frac1s\int_{0}^s \frac{\partial}{\partial \zeta} \left[\sigma(-\zeta\omega^2 Y + \zeta u)\right] \, d\zeta - \sigma'(0) [-\omega^2 Y + u]
  \\
  &=
  \frac1s\left(\int_{0}^s \left[\sigma'(-\zeta\omega^2 Y + \zeta u) - \sigma'(0)\right]\, d\zeta\right) \left[-\omega^2 Y + u\right].
  \end{align*}
  It follows from Lemma \ref{lem:osc} that for any input $u\in K$, with $\sup_{u\in K}\Vert u \Vert_{L^\infty} =: B < \infty$, we have a uniform bound $\Vert Y \Vert_{L^\infty} \le BT/\omega$, hence we can estimate
  \[
  |-\omega^2 Y + u| \le B(\omega T + 1), 
  \]
  uniformly for all such $u$. In particular, it follows that
  \[
  \left|
  s^{-1}\sigma(-s\omega^2 Y + su) - [-\omega^2 Y + u]
  \right|
  \le
  B(T\omega+1) \, \sup_{|x|\le sB (T\omega+1)} |\sigma'(x) - \sigma'(0)|.
  \]
  Clearly, for any $\delta > 0$, we can choose $s\in (0,1]$ sufficiently small, such that the right hand-side is bounded by $\delta$, i.e. with this choice of $s$,  
  \[
    \left|
  s^{-1}\sigma(-s\omega^2 Y(t) + su(t)) - [-\omega^2 Y(t) + u(t)]
  \right|
  \le \delta, \quad \forall \, t\in [0,T],
  \]
  holds for any choice of $u\in K$. We will fix this choice of $s$ in the following, and write $g(y,u) := s^{-1}\sigma(-s\omega^2 y + su)$. We note that $g$ is Lipschitz continuous in $y$, for all $|y|\le BT/\omega$ and $|u|\le B$, with $\Lip_y(g) \le \omega^2 \sup_{|\xi|\le B(\omega T+1)}  |\sigma'(\xi)|$.

  To summarize, we have shown that $Y$ solves
  \[
  \ddot{Y} = g(Y,u) + f, \qquad Y(0) = \dot{Y}(0) = 0,
  \]
  where $\Vert f \Vert_{L^\infty} \le \delta$. By definition, $y_s$ solves
  \[
  \ddot{y}_s = g(y_s,u), \qquad y_s(0) = \dot{y}_s(0) = 0.
  \]
  It follows from this that
  \begin{align*}
    | {y}_s(t) - {Y}(t) |
  &\le
    \int_0^t \int_0^\tau \left\{ | g(y_s(\theta),u(\theta)) - g(Y(\theta),u(\theta)) | + | f(\theta) | \right\} \, d\theta \, d\tau
  \\
  &\le
    \int_0^t \int_0^\tau \left\{ \Lip_y(g) | y_s(\theta) - Y(\theta) | + \delta \right\} \, d\theta \, d\tau
  \\
  &\le
    T \omega^2 \sup_{|\xi|\le B(\omega T+1)}  |\sigma'(\xi)| \int_0^t  | y_s(\tau) - Y(\tau) | \, d\tau + T^2 \delta.
  \end{align*}
  Recalling that $Y(t) = \cL_t u(\omega)$, then by Gronwall's inequality, the last estimate implies that
  \[
  \sup_{t\in [0,T]} |y_s(t) - \cL_t u(\omega)| = \sup_{t\in [0,T]} | y_s - Y | \le C \delta,
  \]
  for a constant $C = C(T,\omega,\sup_{|\xi|\le B(\omega T+1)}  |\sigma'(\xi)|)>0$, depending only on $T$, $\omega$, $B$ and $\sigma'$. Since $\delta > 0$ was arbitrary, we can ensure that $C \delta \le \epsilon$. Thus, we have shown that a suitably rescaled nonlinear oscillator approximates the harmonic oscillator to any desired degree of accuracy, and uniformly for all $u\in K$.

  To finish the proof, we observe that $y$ solves
  \[
  \ddot{y} = \sigma(-\omega^2 y + su), \qquad y(0) = \dot{y}(0) = 0,
  \]
  if, and only if, $y_s = y/s$ solves
  \[
  \ddot{y}_s = s^{-1} \sigma( - s\omega^2 y_s + su), \qquad y_s(0) = \dot{y}_s(0) = 0.
  \]
  Hence, with $W = -\omega^2$, $V = s$, $b=0$ and $A = s^{-1}$, we have
  \[
  \sup_{t\in [0,T]} | A y(t) - \cL_t u(\omega)|
  =
  \sup_{t\in [0,T]} | y_s(t) - \cL_t u(\omega)|
  \le
  \epsilon.
  \]
  This concludes the proof.
\end{proof}
\subsection{Proof of Lemma \ref{lem:time-delay}}
\label{app:tdpf}
\begin{proof}
  Let $\epsilon, \Delta t$ be given. By the sine transform reconstruction Lemma \ref{lem:st}, there exists $N\in \N$, frequencies $\omega_1,\dots, \omega_N$, weights $\alpha_1,\dots, \alpha_N$ and phase-shifts $\vartheta_1,\dots, \vartheta_N$, such that
    \begin{align}
      \label{eq:rec0}
  \sup_{\tau\in [0,\Delta t]} \left| u(t-\tau) - \sum_{j=1}^N \alpha_j \cL_t u(\omega_j) \sin(\omega_j \tau - \vartheta_j) \right| \le \frac{\epsilon}{2}, \quad \forall \, t\in [0,T], \; \forall \, u \in K,
    \end{align}
    where any $u\in K$ is extended by zero to negative times.
    It follows from Lemma \ref{lem:stosc}, that there exists a coupled oscillator network,
    \[
    \ddot{y} = \sigma(w\odot y + Vu + b), \qquad y(0) = \dot{y}(0) = 0,
    \]
    with dimension $m=pN$, and  $w \in \R^{m}$, $V\in \R^{m\times p}$, and a linear output layer $y \mapsto \tilde{A}y$, $\tilde{A}\in \R^{m\times m}$, such that $[\tilde{A} y(t)]_j \approx \cL_t u(\omega_j)$ for $j=1,\dots, N$; more precisely, such that
  \begin{align}
    \label{eq:rec1}
    \sup_{t\in [0,T]} \sum_{j=1}^N |\alpha_j|\left|\cL_t u(\omega_j) - [\tilde{A} y]_j(t) \right| \le \frac{\epsilon}{2}, \quad \forall \, u \in K.
  \end{align}
  Composing with another linear layer $B: \R^m \simeq \R^{p\times N} \to \R^p$, which maps $\bm{\beta} = [\beta_1,\dots, \beta_N]$ to  
  \[
  B\bm{\beta} := \sum_{j=1}^N \alpha_j \beta_j \sin(\omega_j \Delta t - \vartheta_j) \in \R^p,
  \]
  we define $A := B \tilde{A}$, and observe that from \eqref{eq:rec0} and \eqref{eq:rec1}:
  \begin{align*}
    \sup_{t\in [0,T]} \left|u(t-\Delta t) - Ay(t) \right|
    &\le
    \sup_{t\in [0,T]} \left|u(t-\Delta t) - \sum_{j=1}^N \alpha_j \cL_tu(\omega_j) \sin(\omega_j \Delta t - \vartheta_j)\right|\\
    &\qquad
    +
    \sup_{t\in [0,T]} \sum_{j=1}^N |\alpha_j|\left| \cL_tu(\omega_j) - [\tilde{A}y]_j(t)\right||\sin(\omega_j \Delta t - \vartheta_j)|
    \\
    &\le \epsilon.
  \end{align*}
\end{proof}
\subsection{Proof of Lemma \ref{lem:NN}}
\label{app:nnpf}
\begin{proof}
  Fix $\Sigma,\Lambda,\gamma$ as in the statement of the lemma. Our goal is to approximate $u \mapsto \Sigma \sigma(\Lambda u+\gamma)$.
  
  \textbf{Step 1: (nonlinear layer)}
  We consider a first layer for a hidden state $y = [y_1,y_2]^T \in \R^{p+p}$, given by
  \[
  \left\{
  \begin{aligned}
  \ddot{y}_1(t) &= \sigma(\Lambda u(t) + \gamma) \\
  \ddot{y}_2(t) &= \sigma(\gamma)
  \end{aligned}
  \right\}, \quad y(0) = \dot{y}(0) = 0.
  \]
  This layer evidently does not approximate $\sigma(\Lambda u(t) + \gamma)$; however, it does encode this value in the second derivative of the hidden variable $y_1$. The main objective of the following analysis is to approximately compute $\ddot{y}_1(t)$ through a suitably defined additional layer. 
  
  \textbf{Step 2: (Second-derivative layer)}
  To obtain an approximation of $\sigma(\Lambda u(t) + \gamma)$, we first note that the solution operator
  \[
  \cS: u(t) \mapsto \eta(t), \quad \text{where } \; \ddot{\eta}(t) = \sigma(\Lambda u(t) + \gamma)-\sigma(\gamma), \quad \eta(0) = \dot{\eta}(0) = 0,
  \]
  defines a continuous mapping $\cS: C_0([0,T];\R^p) \to C^{2}_0([0,T];\R^p)$, with $\eta(0) = \dot{\eta}(0) = \ddot{\eta}(0) = 0$. Note that $\eta$ is very closely related to $y_1$. The fact that $\ddot{\eta}=0$ is important to us, because it allows us to \emph{smoothly} extend $\eta$ to negative times by setting $\eta(t):=0$ for $t<0$ (which would not be true for $y_1(t)$). The resulting extension defines a compactly supported function $\eta: (-\infty, 0]\to \R^p$, with $\eta \in C^2((-\infty,T];\R^p)$. Furthermore, by continuity of the operator $\cS$, the image $\cS(K)$ of the compact set $K$ under $\cS$ is compact in $C^2((-\infty,T];\R^p)$. From this, it follows that for small $\Delta t > 0$, the second-order backward finite difference formula converges,
  \[
  \sup_{t\in [0,T]}
  \left|
  \frac{\eta(t)-2\eta(t-\Delta t) + \eta(t-2\Delta t)}{\Delta t^2}
  -
  \ddot{\eta}(t)
  \right|
  =o_{\Delta t \to 0}(1), \quad \forall \eta = \cS(u), \, u\in K,
  \]
  where the bound on the right-hand side is uniform in $u\in K$, due to equicontinuity of $\set{\ddot{\eta}}{\eta = \cS(u), \, u\in K}$. In particular, the second derivative of $\eta$ can be approximated through \emph{linear combinations of time-delays of $\eta$}. We can now choose $\Delta t>0$ sufficiently small so that
  \[
  \sup_{t\in [0,T]}
  \left|
  \frac{\eta(t)-2\eta(t-\Delta t) + \eta(t-2\Delta t)}{\Delta t^2}
  -
  \ddot{\eta}(t)
  \right|
  \le \frac{\epsilon}{2  \Vert \Sigma \Vert}, \quad \forall y = \cS(u), \, u\in K,
  \]
  where $\Vert \Sigma \Vert$ denotes the operator norm of the matrix $\Sigma$.
  By Lemma \ref{lem:time-delay}, applied to the input set $\tilde{K} = \cS(K) \subset C_0([0,T];\R^p)$, there exists a coupled oscillator
  \begin{align}
      \label{eq:zeq}
     \ddot{z}(t) = \sigma(w\odot z(t)+V\eta(t)+b), \quad z(0) = \dot{z}(0) = 0,
  \end{align} 
  and a linear output layer $z \mapsto \tilde{A}z$, such that
  \[
  \sup_{t\in [0,T]}
  \left|
  \left[\eta(t) - 2\eta(t-\Delta t) + \eta(t-2\Delta t)\right] - \tilde{A}z(t)
  \right|
  \le
  \frac{\epsilon \Delta t^2}{2  \Vert \Sigma\Vert},
  \quad
  \forall \eta = \cS(u), \, u\in K.
  \]
  Indeed, Lemma \ref{lem:time-delay} shows that time-delays of any given input signal can be approximated with any desired accuracy, and $\eta(t) - 2\eta(t-\Delta) - \eta(t-2\Delta)$ is simply a linear combination of time-delays of the input signal $\eta$ in \eqref{eq:zeq}.

  To connect $\eta(t)$ back to the $y(t)=[y_1(t),y_2(t)]^T$ constructed in Step 1, we note that 
  \[
  \ddot{\eta} = \sigma(Au(t)+b) - \sigma(b) = \ddot{y}_1 - \ddot{y}_2,
  \]
  and hence, taking into account the initial values, we must have $\eta \equiv y_1 - y_2$ by ODE uniqueness. In particular, upon defining a matrix $\tilde{V}$ such that $\tilde{V}y := Vy_1 - Vy_2 \equiv V\eta$, we can equivalently write \eqref{eq:zeq} in the form,
  \begin{align}
      \label{eq:zeq1}
     \ddot{z}(t) = \sigma(w\odot z(t)+\tilde{V}y(t)+b), \quad z(0) = \dot{z}(0) = 0.
  \end{align} 

  \textbf{Step 3: (Conclusion)}

  Composing the layers from Step 1 and 2, we obtain a coupled oscillator
  \[
  \ddot{y}^\ell = \sigma(w^\ell \odot y^\ell + V^\ell y^{\ell-1} + b^\ell), \quad (\ell = 1,2),
  \]
  initialized at rest, with $y^1 = y$, $y^2 = z$, such that for $A := \Sigma \tilde{A}$ and $c := \Sigma \sigma(\gamma)$, we obtain
  \begin{align*}
  \sup_{t\in [0,T]}
  \left|
  \left[A y^2(t) + c\right] - \Sigma \sigma(\Lambda u(t) + \gamma)
  \right|
  &\le
  \Vert \Sigma\Vert
  \sup_{t\in [0,T]}
  \left|
  \tilde{A} z(t) - \left[\sigma(\Lambda u(t) + \gamma)-\sigma(\gamma)\right]
  \right|
  \\
  &=
  \Vert \Sigma \Vert
  \sup_{t\in [0,T]}
  \left|
  \tilde{A} z(t) - \ddot{\eta}(t)
  \right|
  \\
  &\le
  \Vert \Sigma \Vert
  \sup_{t\in [0,T]}
  \left|
  \tilde{A} z(t) - \frac{\eta(t)-2\eta(t-\Delta t)+\eta(t-2\Delta t)}{\Delta t^2}
  \right|
  \\
  &\qquad +
  \Vert \Sigma \Vert
  \sup_{t\in [0,T]}
  \left|
  \frac{\eta(t)-2\eta(t-\Delta t)+\eta(t-2\Delta t)}{\Delta t^2} - \ddot{\eta}(t)
  \right|
  \\
  &\le \frac{\epsilon}{2} + \frac{\epsilon}{2} = \epsilon.
  \end{align*}
  This concludes the proof.
\end{proof}

\subsection{Proof of Theorem \ref{thm:main}}
\label{app:pf}
\begin{proof}
  \textbf{Step 1:}
  By the Fundamental Lemma \ref{lem:fund}, there exist $N$, a continuous mapping $\Psi$, and frequencies $\omega_1,\dots, \omega_N$, such that
  \[
  |\Phi(u)(t) - \Psi(\cL_tu(\omega_1),\dots,\cL_tu(\omega_N);t^2/4)| \le \epsilon,
  \]
  for all $u\in K$, and $t\in [0,T]$. Let $M$ be a constant such that
  \[
  |\cL_t u(\omega_1)|, \dots, |\cL_tu(\omega_N)|, \frac{t^2}{4} \le M,
  \]
  for all $u\in K$ and $t\in [0,T]$.
  By the universal approximation theorem for ordinary neural networks, there exist weight matrices $\Sigma, \Lambda$ and bias $\gamma$, such that
  \[
  |\Psi(\beta_1,\dots, \beta_N;t^2/4) - \Sigma \sigma(\Lambda \bm{\beta} + \gamma)|
  \le \epsilon, \quad \bm{\beta} := [\beta_1,\dots,\beta_N;t^2/4]^T,
  \]
  holds for all $t\in [0,T]$, $|\beta_1|,\dots, |\beta_N| \le M$.
  
  \textbf{Step 2:}
  Fix $\epsilon_1 \le 1$ sufficiently small, such that also $\Vert \Sigma \Vert \Vert \Lambda \Vert \Lip(\sigma) \epsilon_1 \le \epsilon$, where $\Lip(\sigma) := \sup_{|\xi|\le \Vert \Lambda \Vert M+|\gamma|+1} |\sigma'(\xi)|$ denotes an upper bound on the Lipschitz constant of the activation function over the relevant range of input values. It follows from Lemma \ref{lem:stosc}, that there exists an oscillator network,
  \begin{align}
    \label{eq:y1}
    \ddot{y}^1 = \sigma( w^1 \odot y^1 + V^1 u + b^1),\quad y^1(0) = \dot{y}^1(0) = 0,
  \end{align}
  of depth $1$, such that
  \[
  \sup_{t\in [0,T]} | [\cL_tu(\omega_1),\dots, \cL_tu(\omega_N);t^2/4]^T - A^1y^1(t) | \le \epsilon_1,
  \]
  for all $u\in K$.

  \textbf{Step 3:}
  Finally, by Lemma \ref{lem:NN}, there exists an oscillator network,
  \[
  \ddot{y}^2 = \sigma(w^2 \odot y^2 + V^2 y^1 + b^1),
  \]
  of depth $2$, such that
  \[
  \sup_{t\in [0,T]} |A^2y^2(t) - \Sigma \sigma(\Lambda A^1 y^1(t) + \gamma)| \le \epsilon,
  \]
  holds for all $y^1$ belonging to the compact set $K_1 := \cS(K) \subset C_0([0,T];\R^{N+1})$, where $\cS$ denotes the solution operator of \eqref{eq:y1}.
  
  \textbf{Step 4:} Thus, we have for any $u\in K$, and with short-hand $\cL_tu(\bm{\omega}) := (\cL_t u(\omega_1),\dots, \cL_tu(\omega_N))$, 
  \begin{align*}
    \left| \Phi(u)(t) - A^2y^2(t) \right|
    &\le \left| \Phi(u)(t) - \Psi(\cL_tu(\bm{\omega});t^2/4)\right|  \\
    &\qquad +  \left|\Psi(\cL_tu(\bm{\omega});t^2/4) -\Sigma \sigma(\Lambda[\cL_tu(\bm{\omega});t^2/4] + \gamma) \right|  \\
    &\qquad
    + \left|\Sigma \sigma(\Lambda[\cL_tu(\bm{\omega});t^2/4] + \gamma) - \Sigma \sigma(\Lambda A^1y^1(t) + \gamma) \right|  \\
    &\qquad
    + \left|\Sigma \sigma(\Lambda A_1y_1(t) + \gamma) - A^2y^2(t) \right|.
  \end{align*}
  By step 1, we can estimate
  \[
  \left| \Phi(u)(t) - \Psi(\cL_tu(\bm{\omega});t^2/4)\right| \le \epsilon, \quad \forall \, t\in[0,T], \; u\in K.
  \]
  By the choice of $\Sigma, \Lambda, \gamma$, we have
  \[
  \left|\Psi(\cL_tu(\bm{\omega});t^2/4) -\Sigma \sigma(\Lambda [\cL_tu(\bm{\omega});t^2/4] + \gamma) \right|
  \le
  \epsilon, \quad \forall \, t\in[0,T], \; u\in K.
  \]
  By construction of $y^1$ in Step 2, we have
  \begin{align*}
    &\left|\Sigma \sigma(\Lambda [\cL_tu(\bm{\omega});t^2/4] + \gamma) - \Sigma \sigma(\Lambda A_1y_1(t) + \gamma) \right|
    \\
    &\qquad\qquad\le \Vert \Sigma \Vert \Lip(\sigma) \Vert \Lambda \Vert \left|
    [\cL_tu(\bm{\omega});t^2/4] - A^1 y^1(t)
    \right|
    \\
    &\qquad\qquad\le
    \Vert \Sigma \Vert \Lip(\sigma) \Vert \Lambda \Vert \, \epsilon_1
    \\
    &\qquad\qquad \le \epsilon,
  \end{align*}
  for all $t\in [0,T]$ and $u\in K$. By construction of $y^2$ in Step 3, we have
  \[
  \left|\Sigma \sigma(\Lambda A^1y^1(t) + \gamma) - A^2y^2(t) \right|
  \le
  \epsilon, \quad \forall \, t\in [0,T],\; u\in K.
  \]
  Thus, we conclude that
  \[
  |\Phi(u)(t) - A^2 y^2(t)| \le 4\epsilon,
  \]
  for all $t\in [0,T]$ and $u\in K$. Since $\epsilon > 0$ was arbitrary, we conclude that for any causal and continuous operator $\Phi: C_0([0,T];\R^p) \to C_0([0,T];\R^q)$, compact set $K\subset C_0([0,T];\R^p)$ and $\epsilon > 0$, there exists a coupled oscillator of depth 3, which uniformly approximates $\Phi$ to accuracy $\epsilon$ for all $u\in K$. This completes the proof.
\end{proof}

\end{document}